\newcommand{\E}{\mathbb{E}}
\newcommand{\reals}{\mathbb{R}}
\newcommand{\pred}{\widehat{y}}
\newcommand{\ind}[1]{\bone\left\{ #1 \right\}}
\def\EE{{\mathbb{E}}}
\newcommand{\RegExp}{\mathsf{R}}
\newcommand{\RegEmp}{\widehat{\mathsf{R}}}
\newcommand{\xtm}{{\sf eXtreme}\xspace}
\newcommand{\igw}{{\sf IGW}\xspace}
\newcommand{\reg}{{\sf regression-oracle}\xspace}
\newcommand{\chosenset}{\mathcal{A}}
\newcommand{\feedbackset}{\Phi}
\setlist[compactitem]{leftmargin=*}
\titlespacing\section{0pt}{\parskip - 0em}{\parskip - 0.1em}
\titlespacing\subsection{0pt}{\parskip - 0.1em}{\parskip - 0.2em}
\newcommand\blfootnote[1]{%
  \begingroup
  \renewcommand\thefootnote{}\footnote{#1}%
  \addtocounter{footnote}{-1}%
  \endgroup
}
\title{Top-$k$ eXtreme Contextual Bandits with Arm Hierarchy}
\author{Rajat Sen$^1$ \and Alexander Rakhlin$^{2,3}$ \and Lexing Ying$^{4,3}$ \and Rahul Kidambi$^{3}$ \and Dean Foster$^{3}$ \and Daniel Hill$^{3}$ \and Inderjit Dhillon$^{5,3}$}
\begin{document}
\maketitle
\begin{abstract}
    Motivated by modern applications, such as online advertisement and recommender systems, we study the top-$k$ \xtm contextual bandits problem, where the total number of arms can be enormous, and the learner is allowed to select $k$ arms and observe all or some of the rewards for the chosen arms. We first propose an algorithm for the non-\xtm  realizable setting, utilizing the Inverse Gap Weighting strategy for selecting multiple arms. We show that our algorithm has a regret guarantee of $O(k\sqrt{(A-k+1)T \log (|\cF|T)})$, where $A$ is the total number of arms and $\cF$ is the class containing the regression function, while only requiring $\tilde{O}(A)$ computation per time step. In the $\xtm$ setting, where the total number of arms can be in the millions, we propose a practically-motivated arm hierarchy model that induces a certain structure in mean rewards to ensure statistical and computational efficiency. The hierarchical structure allows for an exponential reduction in the number of relevant arms for each context, thus resulting in a regret guarantee of $O(k\sqrt{(\log A-k+1)T \log (|\cF|T)})$. Finally, we implement our algorithm using a hierarchical linear function class and show superior performance with respect to well-known benchmarks on simulated bandit feedback experiments using \xtm multi-label classification datasets. On a dataset with three million arms, our reduction scheme has an average inference time of only 7.9 milliseconds, which is a 100x improvement.
\end{abstract}

\fontfamily{lmr}\selectfont

\section{Introduction}
\label{sec:intro}
\blfootnote{$^1$ Google Research, Work done while at Amazon. ~$^{2}$ MIT. ~$^{3}$ Amazon. ~$^{4}$  Stanford University. ~$^{5}$  University of Texas at Austin.}
The \textit{contextual bandit} is a sequential decision-making problem, in which, at every time step, the learner observes a context, chooses one of the $A$ possible actions (arms), and receives a reward for the chosen action. Over the past two decades, this problem has found a wide range of applications, from e-commerce and recommender systems~\citep{yue2011linear, li2016collaborative} to medical trials~\citep{durand2018contextual, villar2015multi}. The aim of the decision-maker is to minimize the difference in total expected reward collected when compared to an optimal policy, a quantity termed \textit{regret}. As an example, consider an advertisement engine in an online shopping store, where the context can be the user's query, the arms can be the set of millions of sponsored products and the reward can be a click or a purchase. In such a scenario, one must balance between \textit{exploitation} (choosing the best ad (arm) for a query (context) based on current knowledge) and \textit{exploration} (choosing a currently unexplored ad for the context to enable future learning).

The contextual bandits literature can be broadly divided into two categories. The \textit{agnostic} setting~\citep{agarwal2014taming, langford2007epoch, beygelzimer2011contextual, rakhlin2016bistro} is a model-free setting where one competes against the best policy (in terms of expected reward) in a class of policies. On the other hand, in the \textit{realizable} setting it is assumed that a known class $\cF$ contains the function mapping contexts to expected rewards. Most of the algorithms in the realizable setting are based on Upper Confidence Bound or Thompson sampling~\citep{filippi2010parametric, chu2011contextual, krause2011contextual, agrawal2013thompson} and require specific parametric assumptions on the function class. Recently there has been exciting progress on contextual bandits in the realizable case with general function classes. \cite{foster2020beyond} analyzed a simple algorithm for general function classes that reduced the adversarial contextual bandit problem to online regression, with a minimax optimal regret scaling. The algorithm was then analyzed for i.i.d. contexts using offline regression in~\citep{simchi2020bypassing}. The proposed algorithms are general and easily implementable but have two main shortcomings. 

First, in many practical settings the task actually involves selecting a small number of arms per time instance rather than a single arm. For instance, in our advertisement example, the website can have multiple slots to display ads and one can observe the clicks received from some or from all the slots. It is not immediately obvious how the techniques in~\citep{simchi2020bypassing, foster2020beyond} can be extended to selecting $k$ of a total of $A$ arms while avoiding the combinatorial explosion from $A \choose k$ possibilities. Second, the total number of arms $A$ can be in tens of millions and we need to develop algorithms that only require $o(A)$ computation per time-step and also have a much smaller dependence on the total number of arms in the regret bounds. Therefore, in this paper, we consider the top-$k$ \xtm  contextual bandit problem where the number of arms is potentially enormous and at each time-step one is allowed to select $k \geq 1$ arms.

This extreme setting is both theoretically and practically challenging, due to the sheer size of the arm space. On the theoretical side, most of the existing results on contextual bandit problems address the small arm space case, where the complexity and regret typically scales polynomially (linearly or as square root) in terms of the number of arms (with the notable exception of the case when arms are embedded in a $d$-dimensional vector space \citep{foster2020adapting}). Such a scaling inevitably results in large complexity and regret in the extreme setting. On the implementation side, most contextual bandit algorithms have not been shown to scale to millions of arms. The goal of this paper is to bridge the gaps both in theory and in practice. We show that the freedom to present more than one arm per time step provides valuable exploration opportunities. Moreover in many applications, for a given context, the rewards from the arms that are correlated to each other but not directly related to the context are often quite similar, while large variations in the reward values are only observed for the arms that are closely related to the context. For instance in the advertisement example, for an electronics query (context) there might be finer variation in rewards among computer accessories related display ads while very little variation in rewards among items in an unrelated category like culinary books. This prior knowledge about the structure of the reward function can be modeled via a judicious choice of the model class $\cF$, as we show in this paper.

The \textbf{main contributions} of this paper are as follows:
\begin{compactitem}
\item We define the top-$k$ contextual bandit problem in Section~\ref{sec:topk_cb}. We propose a natural modification of the inverse gap weighting (\igw) sampling strategy employed  in~\citep{foster2020beyond, simchi2020bypassing,abe1999associative} as Algorithm~\ref{alg:topk}. In Section~\ref{sec:topk_results} we show that our algorithm can achieve a top-$k$ regret bound of $O(k\sqrt{(A-k+1)T \log (|\cF|T)})$ where $T$ is the time-horizon. Even though the action space is combinatorial, our algorithm's computational cost for a time-step is $O(A)$ as it can leverage the additive structure in the total reward obtained from a set of arms chosen. We also prove that if the problem setting is only approximately realizable then our algorithm can achieve a regret scaling of $O(k\sqrt{(A-k+1)T \log (|\cF|T)} + \epsilon k\sqrt{A-k+1}T)$, where $\epsilon$ is a measure of the approximation.

\item Inspired by success of tree-based approached for \xtm output space problems in supervised learning~\citep{prabhu2018parabel, pecos2020, khandagale2020bonsai}, in Section~\ref{sec:xtm} we introduce a hierarchical structure on the set of arms to tackle the \xtm setting. This allows us to propose an \xtm reduction framework that reduces an extreme contextual bandit   problem with $A$ arms ($A$ can be in millions) to an equivalent problem with only $O(\log A)$ arms. Then we show that our regret guarantees from Section~\ref{sec:topk_results} carry over to this reduced problem.

\item We implement our \xtm contextual bandit algorithm with a hierarchical linear function class and test the performance of different exploration strategies under our framework on \xtm multi-label datasets~\citep{Bhatia16} in Section~\ref{sec:sims}, under simulated bandit feedback~\citep{bietti2018contextual}. On the amazon-3m dataset, with around three million arms, our reduction scheme leads to a 100x improvement in inference time over a naively evaluating the estimated reward for every arm given a context. We show that the \xtm reduction also leads to a 29\% improvement in progressive mean rewards collected on the eurlex-4k dataset.
More over we show that our exploration scheme has the highest win percentage among the 6 datasets w.r.t the baselines.
\end{compactitem}








\section{Related Work}
\label{sec:rwork}
The relevant prior work can be broadly classified under the following three categories:

{\bf General Contextual Bandits: } The general contextual bandit problem has been studied for more than two decades. In the agnostic setting where the mean reward of the arms given a context is not fully captured by the function class $\cF$, the problem was  studied in the adversarial setting leading to the well-known EXP-4 class of algorithms~\citep{auer2002nonstochastic, mcmahan2009tighter, beygelzimer2011contextual}. These algorithms can achieve the optimal $\tilde{O}(\sqrt{AT\log (T|\cF|)})$ regret bound but the computational cost per time-step can be $O(|\cF|)$. This paved the way for oracle-based contextual bandit algorithms in the stochastic setting~\citep{agarwal2014taming, langford2007epoch}. The algorithm in~\citep{agarwal2014taming} can achieve optimal regret bounds while making only $\tilde{O}(\sqrt{AT})$ calls to a cost-sensitive classification oracle, however the algorithm and the oracle are not easy to implement in practice. In more recent work, it has been shown that algorithms that use regression oracles work better in practice~\citep{foster2018practical}. In this paper we will be focused on the realizable (or near-realizable) setting, where there exists a function in the function class, which can model the expected reward of arms given context. This setting has been studied with great practical success under specific instances of the function classes, such as linear. Most of the successful approaches are based on Upper Confidence Bound strategies or Thompson Sampling~\citep{filippi2010parametric, chu2011contextual, krause2011contextual, agrawal2013thompson}, both of which lead to algorithms which are heavily tailored to the specific function class. The general realizable case was modeled in~\citep{agarwal2012contextual} and recently there has been exciting progress in this direction. The authors in~\citep{foster2020beyond} identified that a particular exploration scheme that dates back to~\citep{abe1999associative} can lead to a simple algorithm that reduces the contextual bandit problem to online regression and can achieve optimal regret guarantees. The same idea was extended for the stochastic realizable contextual bandit problem with an offline batch regression oracle~\citep{simchi2020bypassing,foster2020instance}. We build on the techniques introduced in these works. However all the literature discussed so far only address the problem of selecting one arm per time-step, while we are interested in selection the top-$k$ arms at each time step. 

{\bf Exploration in Combinatorial Action Spaces: } In~\citep{qin2014contextual} authors study the $k$-arm selection problem in contextual bandits where the function class is linear and the utility of a set of arms chosen is a set function with some monotonicity and Lipschitz continuity properties. In~\citep{yue2011linear} the authors study the problem of retrieving $k$-arms in contextual bandits in the context of a linear function class and the assumption that the utility of a set of arms is sub-modular. Both these approaches do not extend to general function classes and are not applicable to the extreme setting. In the context of off-policy learning from logged data there are several works that address the top-$k$ arms selection problem under the context of slate recommendations~\citep{swaminathan2017off, narita2019efficient}. We will now review the combinatorial action space literature in multi-armed bandit (MAB) problems. Most of the work in this space deals with semi-bandit feedback~\citep{chen2016combinatorial, combes2015combinatorial, kveton2015tight, merlis2019batch}. This is also our feedback model, but we work in a contextual setting. There is also work in the full-bandit feedback setting, where one gets to observe only one representative reward for the whole set of arms chosen. This body of literature can be divided into the adversarial setting~\citep{merlis2019batch, cesa2012combinatorial} and the stochastic setting~\citep{dani2008stochastic, agarwal2018regret, lin2014combinatorial, rejwan2020top}.

{\bf Learning in \xtm Output Spaces: } The problem of learning from logged bandit feedback when the number of arms is extreme was studied recently in~\citep{lopez2020learning}. In~\citep{majzoubi2020efficient} the authors address the contextual bandit problem for continuous action spaces by using a cost sensitive classification oracle for large number of classes, which is itself implemented as a hierarchical tree of binary classifiers. In the context of supervised learning the problem of learning under large but correlated output spaces has been studied under the banner of \xtm Multi-Label Classification/Ranking (XMC/ XMR) (see~\citep{Bhatia16} and references). Tree based methods for XMR have been extremely successful~\citep{jasinska2016extreme, prabhu2018parabel, khandagale2020bonsai, wydmuch2018no, you2019attentionxml, pecos2020}. In particular our assumptions about arm hierarchy and the implementation of our algorithms have been motivated by~\citep{prabhu2018parabel, pecos2020}. 


\section{Top-$k$ Stochastic Contextual Bandit Under Realizability}
\label{sec:topk}
In the standard contextual bandit problem, at each round, a context is revealed to the learner, the learner picks a single arm, and the reward for only that arm is revealed. In this section, we will study the top-$k$ version of this problem, i.e. at each round the learner selects $k$ distinct arms, and the total reward corresponds to the sum of the rewards for the subset. As feedback, the learner observes some of the rewards for actions in the chosen subset, and we allow this feedback to be as rich as the rewards for all the $k$ selected arms or as scarce as no feedback at all on the given round. 


\subsection{The Top-$k$ Problem} 
\label{sec:topk_cb}

Suppose that at each time step $t \in \{1,\ldots,T\}$, the environment generates a context $x_t \in \Xcal$ and rewards  $\{r_t(a)\}_{a \in [A]}$ for the $A$ arms. The set of arms will be denoted by $\cA = [A] := \{1, 2, \cdots, A\}$. As standard in the stochastic model of contextual bandits, we shall assume that $(x_t, r_t(1), \cdots r_t(A))$ are generated i.i.d. from a fixed but unknown distribution $\Dcal$ at each time step.  In this work we will assume for simplicity that $r_t(a) \in [0,1]$ almost surely for all $t$ and $a \in [A]$. We will work under the realizability assumption~\citep{agarwal2012contextual, foster2018practical, foster2020beyond}. We also provide some results under approximate realizabilty or the misspecified setting similar to~\citep{foster2020beyond}.

\begin{assumption} [{\bf Realizability}]
	\label{asum:realizability}  There exists an $f^* \in \Fcal$ such that, 
	\begin{align}
	  \EE[r_{t}(a) \vert X = x] = f^*(x, a) ~~\forall x \in \Xcal, a \in [A],
	\end{align}
	where $\cF$ is a class of functions $\Xcal \times \Acal \rightarrow [0, 1]$ known to the decision-maker.
\end{assumption}


\begin{assumption} [{\bf $\epsilon$-Realizability}]
	\label{asum:realizability2}  There exists an $f^* \in \Fcal$ such that, 
	\begin{align}
	  \lvert \EE[r_{t}(a) \vert X = x] - f^*(x, a) \rvert \leq \epsilon ~~~~~~~~ \forall x \in \Xcal, a \in [A].
	\end{align}
	where $\cF$ is a class of functions $\Xcal \times \Acal \rightarrow [0, 1]$ known to the decision-maker.
\end{assumption}

We assume that the misspecification level $\epsilon$ is known to the learner and refer to \citep{foster2020adapting} for techniques on adapting to this parameter.

{\bf Feedback Model and Regret.~} At the beginning of the time step $t$, the learner observes the context $x_t$ and then chooses a set of $k$ \textit{distinct} arms $\chosenset_t\subseteq \cA$, $|\chosenset_t|=k$. 
The learner receives feedback for a subset $\feedbackset_t\subseteq \chosenset_t$, that is, $r_t(a)$ is revealed to the learner for every $a\in\feedbackset_t$.
\begin{assumption}
\label{asum:feedback}
    Conditionally on $x_t,\chosenset_t$ and the history $\cH_{t-1}$ up to time $t-1$, the set $\feedbackset_t\subseteq\chosenset_t$ is random and for any $a\in \chosenset_t$,
    \[\mathbb{P}(a\in \Phi_t|x_t,\chosenset_t, \cH_{t-1})\geq c \]
    for some $c\in (0,1]$ which we assume to be known to the learner.
\end{assumption}
For the advertisement example, Assumption~\ref{asum:feedback} means that the user providing feedback has at least some non-zero probability $c>0$ of choosing each of the presented ads, marginally. The choice $c=1$ corresponds to the most informative case -- the learner receives feedback for all the $k$ chosen arms. On the other hand, for $c<1$ it may happen that no feedback is given on a particular round (for instance, if $\feedbackset_t$ includes each $a\in\chosenset_t$ independently with probability $c$). When $\chosenset_t$ is a ranked list, behavioral models postulate that the user clicks on an advertisement according to a certain distribution with decreasing probabilities; in this case, $c$ would correspond to the smallest of these probabilities. A more refined analysis of regret bounds in terms of the distribution of $\Phi_t$ is beyond the scope of this work.

The total reward obtained in time step $t$ is given by the sum $\sum_{a\in\chosenset_t} r_t(a)$ of all the individual arm rewards in the chosen set, regardless of whether only some of these rewards are revealed to the learner. 
The performance of the learning algorithm will be measured in terms of \textit{regret}, which is the difference in mean rewards obtained as compared to an optimal policy which always selects the top $k$ distinct actions with the highest mean reward. To this end, let $\chosenset_t^*$ be the set of $k$ distinct actions that maximizes $\sum_{a\in \chosenset_t^*} f(x_t,a)$ for the given $x_t$. Then the expected regret is
\begin{equation}
  R(T) := \sum_{t = 1}^{T} \EE \left[\sum_{a\in \chosenset_t^*} f^*(x_t, a) - \sum_{a\in\chosenset_t} f^*(x_t, a) \right].
\end{equation}

{\bf Regression Oracle.} As in \citep{foster2018practical,simchi2020bypassing}, we will rely on the availability of an optimization oracle \reg for the class $\cF$ that can perform least-squares regression,
\begin{equation}
\label{eq:oracle} \argmin_{f \in \Fcal}\sum_{s=1}^{t} (f(x_a, a_s) - r_s)^2 
\end{equation}
where $(x, a, r) \in \Xcal \times \cA \times [0,1]$ ranges over the collected data.

\subsection{\igw for top-$k$ Contextual Bandits}
\label{sec:topk_igw}

Our proposed algorithm for top-$k$ arm selection in general contextual bandits in a non-extreme setting is provided as Algorithm~\ref{alg:topk}. It is a natural extension of the Inverse Gap Weighting (\igw) sampling scheme~\citep{abe1999associative, foster2020beyond, simchi2020bypassing}. In Section~\ref{sec:topk_results} we will show that this algorithm with $r=1$ has good regret guarantees for the top-$k$ problem even though the action space is combinatorial, thanks to the linearity of the regret objective in terms of rewards of individual arms in the subset. Note that a naive extension of \igw by treating each action in $\cA^{k}$ as a separate arm would require a computation of $O\left( A \choose k \right)$ per time step and a similar regret scaling. In contrast, Algorithm~\ref{alg:topk} only requires $\tilde{O}(A)$ computation for the sampling per time step.

\begin{algorithm}[!ht]
\small
\caption{Top-$k$ Contextual Bandits with \igw}
\label{alg:topk}
\begin{algorithmic}[1]
\STATE {\bf Arguments:~} $k$ and $r$ (number of explore slots, $ 1 \leq r \leq k $) 
 \FOR{$l \gets 1$ \textbf{to} $e(T)$} 
   \STATE Fit regression oracle to all past data
   \STATE $\pred_l = \argmin_{f\in\cF} \sum_{t=1}^{N_{l-1}} \sum_{a\in \feedbackset_t} (f(x_{t}, a)-r_{t}(a))^2$
  \FOR {$s \gets N_{l-1}+1$ \textbf{to} $N_{l-1} + n_{l}$}
    \STATE Receive $x_s$
     \STATE Let $\widehat{a}_s^1,\ldots,\widehat{a}_s^A$ be the arms ordered in decreasing order according to $\pred_l(x_s,\cdot)$ values.
     \STATE $\chosenset_s = \{\widehat{a}_s^1, \cdots, \widehat{a}_s^{k-r}\}$.
     \FOR {$c \gets 1$ \textbf{to} $r$}
        \STATE Compute randomization distribution 
        \STATE $p= \igw\left(\{\cA \setminus \chosenset_s\}; \pred_l(x_s, \cdot)\right)$. 
        \STATE Sample $a\sim p$. Let $\chosenset_s=\chosenset_s \cup \{a\}$.
    \ENDFOR
    \STATE Obtain rewards $r_{s}(a)$ for actions $a\in \feedbackset_s\subseteq \chosenset_s$.
    \ENDFOR
  \STATE Let $N_{l} = N_{l-1} + n_l$
  \ENDFOR
\end{algorithmic}
\end{algorithm}
The Inverse Gap Weighting strategy was introduced in~\citep{abe1999associative} and has since then been used for contextual bandits in the realizable setting with general function classes~\citep{foster2020beyond, simchi2020bypassing, foster2020instance}. Given a set of arms $\cA$, an estimate $\pred:\cX\times\cA\to\reals$ of the reward function, and a context $x$, the distribution $p= \igw\left(\cA; \pred(x, \cdot)\right)$ over arms is given by
\begin{align*}
    p(a | x) = \begin{cases} 
        \frac{1}{|\cA| + \gamma_l(\hat{y}(x, a_{\star}) - \hat{y}(x, a))} & \mbox{if } a \neq a_{\star} \\
        1 - \sum_{a' \in \cA: a' \neq a_{\star}} p(a'|x) & \mbox{otherwise} 
    \end{cases}
\end{align*}
where $a_{\star} = \argmax_{a \in \cA}\pred(x, a)$, $\gamma_l$ is a scaling factor. 
 
Algorithm~\ref{alg:topk} proceeds in epochs, indexed by $l=1,\ldots,e(T)$. Note that $N_{e(T)} = \sum_{l=1}^{e(T)} n_l = T$. The regression model is updated at the beginning of the epoch with all the past data and used throughout the epoch ($n_l$ time steps). The arm selection procedure for the top-$k$ problem involves selecting the top ($k-r$) arms \textit{greedily} according to the current estimate $\pred_l$ and then selecting the rest of the arms at random according to the Inverse Gap Weighted distribution over the set of remaining arms. For $r>1$, the distribution is recomputed over the remaining support every time an arm is selected.

\subsection{Regret of \igw for top-$k$ Contextual Bandits}
\label{sec:topk_results}
In this section we show that our algorithm has favorable regret guarantees. Our regret guarantees are only derived for the case when Algorithm~\ref{alg:topk} is run with $r=1$. However, we will see that other values of $r$ also work well in practice in Section~\ref{sec:sims}. For ease of exposition we assume $\cF$ is finite; our results can be extended to infinite function classes with standard techniques (see e.g. ~\citep{simchi2020bypassing}). We first present the bounds under exact realizability.\footnote{We have not optimized the constants in the definition of $\gamma_l$.}

\begin{theorem}
\label{thm:topkregret} Algorithm~\ref{alg:topk} under Assumptions~\ref{asum:realizability} and \ref{asum:feedback}, when run with parameters
\begin{align*}
    &r = 1;~~~~ N_l = 2^l; ~~~~ \gamma_l = \frac{1}{32}\sqrt{\frac{c(A-k+1)N_{l-1}}{162\log \left( \frac{|\cF| T^3}{\delta}\right)}},
\end{align*}
has regret bound
\begin{align*}
    R(T) = \cO \left(k\sqrt{c^{-1}(A - k +1) T  \log \left( \frac{|\cF| T}{\delta} \right)} \right)
\end{align*}
with probability at least $1 - \delta$, for a finite function class $\cF$.
\end{theorem}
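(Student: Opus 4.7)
I plan to extend the single-arm IGW analysis of \citep{simchi2020bypassing} to the top-$k$ setting by reducing the per-round regret to a single-arm IGW problem on the reduced arm set $\cB_t := \cA \setminus \chosenset_t^g$, where $\chosenset_t^g := \{\widehat{a}_t^1,\ldots,\widehat{a}_t^{k-1}\}$ denotes the greedy portion of the chosen set. Throughout, I write $\Delta_l(x,a) := \pred_l(x,a) - f^*(x,a)$ and let $p_t$ be the IGW distribution over $\cB_t$ used by Algorithm~\ref{alg:topk} at time $t$ in epoch $l$.

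The first step is to invoke the offline least-squares oracle together with Assumption~\ref{asum:feedback}: since each $a \in \chosenset_s$ enters the feedback set with probability at least $c$, standard martingale / offline-oracle concentration yields, with probability $\geq 1 - \delta$,
\begin{equation*}
    \sum_{s \leq N_{l-1}} \EE\Bigl[\sum_{a \in \chosenset_s} \Delta_l(x_s,a)^2\Bigr] \;\lesssim\; \log(|\cF|T/\delta)/c
\end{equation*}
for every epoch $l$. A change-of-measure argument along the doubling schedule (standard for IGW-based algorithms) transfers this in-sample bound to out-of-sample bounds on both $\EE_{a_t \sim p_t}[\Delta_l(x_t,a_t)^2]$ and $\EE[\sum_{a \in \chosenset_t^g} \Delta_l(x_t,a)^2]$ for $t$ in epoch $l$.

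The second step is the per-round regret decomposition $\text{reg}_t = R^{g}_t + R^{\text{IGW}}_t$, where
\begin{equation*}
    R^{\text{IGW}}_t := \max_{b \in \cB_t} f^*(x_t,b) - f^*(x_t, a_t), \quad R^{g}_t := \sum_{a \in \chosenset_t^*} f^*(x_t,a) - \sum_{a \in \chosenset_t^g} f^*(x_t,a) - \max_{b \in \cB_t} f^*(x_t,b).
\end{equation*}
The IGW term is bounded by the standard IGW lemma, $\EE_{a_t}[R^{\text{IGW}}_t] \lesssim (A{-}k{+}1)/\gamma_l + \gamma_l\,\EE_{a_t}[\Delta_l(x_t,a_t)^2]$. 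For $R^{g}_t$, a pairing argument---matching the $f^*$-largest arm in $\chosenset_t^* \setminus \chosenset_t^g$ with the IGW max, and pairing the remaining arms in $\chosenset_t^* \setminus \chosenset_t^g$ with those in $\chosenset_t^g \setminus \chosenset_t^*$ by $f^*$-rank---combined with the greedy inequality $\pred_l(x_t, a') \geq \pred_l(x_t, b)$ for $a' \in \chosenset_t^g$, $b \in \cB_t$, gives $R^{g}_t \leq \sum_{a \in \chosenset_t^g \triangle \chosenset_t^*} |\Delta_l(x_t, a)|$. Applying AM--GM pointwise with $\beta_a = 2\gamma_l p_t(a)$ for $a \in \cB_t$ (using $1/p_t(a) \leq |\cB_t| + \gamma_l$) and $\beta_a = \gamma_l$ for $a \in \chosenset_t^g$ then yields
\begin{equation*}
    \EE[R^{g}_t] \;\lesssim\; \frac{k(A-k+1)}{\gamma_l} + \gamma_l\,\EE_{a_t \sim p_t}[\Delta_l(x_t,a_t)^2] + \gamma_l\,\EE\Bigl[\sum_{a \in \chosenset_t^g} \Delta_l(x_t,a)^2\Bigr].
\end{equation*}

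Summing the combined per-round bound over $t$ in epoch $l$, invoking the first step to control both squared-error sums by $O(\log(|\cF|T/\delta)/c)$, and substituting the prescribed $\gamma_l$ gives $R_l \lesssim k\sqrt{n_l(A{-}k{+}1)\log(|\cF|T/\delta)/c}$; summing over the $O(\log T)$ doubling epochs via Cauchy--Schwarz on $\sum_l \sqrt{n_l}$ produces the advertised regret. The main obstacle is the analysis of $R^{g}_t$: greedy arms are chosen deterministically by $\pred_l$, so the usual density-based AM--GM step used for IGW-sampled arms does not yield a pointwise bound on their prediction error. I resolve this by observing that greedy arms nonetheless belong to $\chosenset_t$ and are thus revealed with probability at least $c$ under Assumption~\ref{asum:feedback}, so the regression oracle controls their expected squared errors---precisely what is required after AM--GM, at the price of the $1/c$ factor appearing in the final bound.
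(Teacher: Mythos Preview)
Your decomposition into $R^{g}_t + R^{\text{IGW}}_t$ and the pairing argument for $R^g_t$ are fine, but the proposal has a real gap in the step you label ``change-of-measure''---this is not standard, and it is in fact the entire difficulty of the proof.

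The regression bound you invoke controls $\sum_{s\leq N_{l-1}} \EE\bigl[\sum_{a\in\chosenset_s}\Delta_l(x_s,a)^2\bigr]$, where each $\chosenset_s$ was selected using $\pred_{e(s)}$ with $e(s)<l$. You need, instead, control of $\EE_{a_t\sim p_t}[\Delta_l(x_t,a_t)^2]$ and $\EE\bigl[\sum_{a\in\chosenset_t^g}\Delta_l(x_t,a)^2\bigr]$ for $t$ in epoch $l$, where both $p_t$ and $\chosenset_t^g$ are determined by the \emph{current} $\pred_l$. There is no density ratio that lets you pass from one to the other: the greedy set of $\pred_l$ can be disjoint from the greedy set of every earlier $\pred_i$, and for those arms the past sampling distribution places mass only through the IGW slot, with weight as small as $1/(A-k+1+\gamma_i)$. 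The paper resolves this not by a change of measure but by the factor-of-2 induction of \citep{simchi2020bypassing}, extended here to non-overlapping $k$-tuples of strategies: Lemma~\ref{lem:main} bounds $\E_x\frac{1}{k}\sum_j|\Delta_l(x,\alpha^j)|$ for \emph{any} non-overlapping $\balpha$ by $\sqrt{2}\phi_l\sqrt{(A-k+1)+\gamma_l\RegExp(\balpha)}$, and the induction hypothesis relating $\RegExp$ and $\RegEmp_l$ (with the auxiliary Lemma~\ref{lem:induct} on $\sum_j[v(\widehat a^k)-v(a^j)]_+$) closes the loop. Your outline skips this machinery entirely.

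There is also a concrete arithmetic error in your AM--GM step. For $a\in\cB_t$ you take $\beta_a=2\gamma_l p_t(a)$ and bound $1/p_t(a)\leq |\cB_t|+\gamma_l$, which gives $\tfrac{1}{2\beta_a}\leq \tfrac{A-k+1}{4\gamma_l}+\tfrac14$. Summed over up to $k$ arms in $\chosenset_t^*\setminus\chosenset_t^g$ and then over $T$ rounds, the $\tfrac14$ produces a $\Theta(kT)$ term that destroys the bound. The correct per-arm cost is $\tfrac{1}{4\gamma_l p_t(a)}=\tfrac{A-k+1}{4\gamma_l}+\tfrac14\bigl(\pred_l(x,\widehat a^k)-\pred_l(x,a)\bigr)$, and the estimated-gap residual is precisely what the induction controls; without it you cannot make this term vanish.
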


In the next theorem we bound the regret under $\epsilon$-realizability.

\begin{theorem}
\label{thm:topkregret2} Algorithm~\ref{alg:topk} under Assumptions~\ref{asum:realizability2} and \ref{asum:feedback}, when run with parameters
\begin{align*}
    &r = 1;~~~~ N_l = 2^l; ~~~~ \gamma_l = \frac{\sqrt{c(A - k + 1)}}{32 \sqrt{\frac{420}{N_{l-1}}
      \log \left( \frac{|\cF| T^3}{\delta}\right) + 2 \epsilon^2}}
\end{align*}
has regret bound
\begin{align*}
    R(T) &= \cO \left(k\sqrt{c^{-1}(A - k +1) T  \log \left( \frac{|\cF| T}{\delta} \right)}+ \epsilon kT\sqrt{A - k + 1}\right)
\end{align*}
with probability at least $1 - \delta$, for a finite function class $\cF$.
\end{theorem}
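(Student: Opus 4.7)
The plan is to adapt the proof of Theorem~\ref{thm:topkregret} to the misspecified setting, with the main change being that the regression oracle guarantee now picks up an additive $\epsilon^2$ term that we must carry carefully through the IGW analysis. First, I would establish a misspecified least-squares oracle bound: at the start of epoch $l$, if $\hat{y}_l$ is the ERM over past data collected by Algorithm~\ref{alg:topk}, then with probability at least $1-\delta/T$,
\begin{equation*}
  \sum_{s=1}^{N_{l-1}} \Expect[(\hat{y}_l(x_s,a)-f^*(x_s,a))^2 \mid \cH_{s-1}] \;\lesssim\; \frac{1}{c}\log(|\cF|T/\delta) + N_{l-1}\epsilon^2,
\end{equation*}
where the expectation is over the feedback index $a\in\feedbackset_s$ and the factor $1/c$ comes from Assumption~\ref{asum:feedback}. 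This follows from a standard offset Rademacher / Bernstein argument on the excess squared loss, with the $N_{l-1}\epsilon^2$ term arising because the Bayes predictor is only $\epsilon$-close (in sup-norm) to the best element of $\cF$. Essentially this is the same bound as in the realizable case but with an additive $\epsilon^2$ per sample.

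Next I would import the top-$k$ per-round regret decomposition that was established for Theorem~\ref{thm:topkregret}. The key structural fact is that, because the reward is additive in the chosen set and because $r=1$, the per-round regret relative to the optimal top-$k$ set decomposes as
\begin{equation*}
  \Expect\bigl[\text{reg}_t \,\big|\, x_t,\hat{y}_l\bigr] \;\leq\; k\cdot\Expect_{a\sim p_t}\bigl[f^*(x_t,a^*)-f^*(x_t,a)\bigr],
\end{equation*}
where $p_t$ is the IGW distribution over the $A-k+1$ residual arms and $a^*$ is the best residual arm. This reduces the analysis to the single-arm IGW lemma, but over a support of size $A-k+1$ rather than $A$, with an extra $k$ factor in front. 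Applying the standard IGW bound (see e.g.\ \citep{foster2020beyond,simchi2020bypassing}) yields
\begin{equation*}
  \Expect[\text{reg}_t] \;\lesssim\; \frac{k(A-k+1)}{\gamma_l} \;+\; \gamma_l\,k\,\Expect\bigl[(\hat{y}_l(x_t,a)-f^*(x_t,a))^2\bigr].
\end{equation*}

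Substituting the misspecified oracle bound and averaging the squared error over the epoch gives a bound of order
\begin{equation*}
  \Expect[\text{reg}_t] \;\lesssim\; \frac{k(A-k+1)}{\gamma_l} \;+\; \gamma_l\,k\,\left(\frac{\log(|\cF|T/\delta)}{c\,N_{l-1}} + \epsilon^2\right)
\end{equation*}
for each round $t$ in epoch $l$. The stated choice of $\gamma_l$ is exactly the optimizer of this expression, balancing the exploration bias $k(A-k+1)/\gamma_l$ against both the statistical error and the misspecification error. Plugging it back yields a per-round bound of order
\begin{equation*}
  k\sqrt{c^{-1}(A-k+1)\!\left(\tfrac{\log(|\cF|T/\delta)}{N_{l-1}}+\epsilon^2\right)}.
\end{equation*}

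Finally I would sum over rounds within each epoch and over epochs using $N_l = 2^l$, separating the two terms under the square root. The statistical term telescopes as $\sum_l n_l/\sqrt{N_{l-1}} = O(\sqrt{T})$, yielding the first summand $k\sqrt{c^{-1}(A-k+1)T\log(|\cF|T/\delta)}$. The misspecification term contributes $\sum_l n_l\cdot \epsilon = \epsilon T$ in front of $k\sqrt{A-k+1}$, giving the second summand $\epsilon k T\sqrt{A-k+1}$. A union bound over $l\leq e(T)=O(\log T)$ epochs controls the failure probability at $\delta$. The main obstacle, and the only genuine novelty beyond Theorem~\ref{thm:topkregret}, is tracking the $\epsilon^2$ term through both the martingale concentration for the regression oracle and the IGW balancing step; the combinatorial top-$k$ reduction is identical to the exactly realizable case.
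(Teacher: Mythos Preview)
Your step 2 is not what the paper establishes for Theorem~\ref{thm:topkregret}, and the inequality you write there is in fact false. Take $k=2$ with arms $1,2$ having $f^*=1$, arm $3$ having $f^*=0.5$, and all other arms $f^*=0$; let $\hat y_l$ rank a bad arm $4$ first and arm $3$ second, with all remaining $\hat y_l$ values zero. The single greedy slot picks arm $4$ (contributing $f^*=0$), and with large $\gamma_l$ the IGW slot over the residuals $\{1,2,3,5,\ldots\}$ picks arm $3$ almost surely (contributing $f^*=0.5$). The per-round regret is $2-0.5=1.5$, yet the best residual arm has $f^*=1$, so your right-hand side is $k\cdot(1-0.5)=1<1.5$. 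The greedy top-$(k-1)$ slots can incur regret that no comparison restricted to the residual arms will see.

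What the paper actually proves is an induction over epochs (the ``Induction hypothesis $(l)$'' together with Lemma~\ref{lem:main}) showing that, for any non-overlapping strategies $\alpha^1,\ldots,\alpha^k$, the true top-$k$ regret is at most $k(A-k+1)/\gamma_l$ plus twice the \emph{estimated} top-$k$ regret $\sum_j[\hat y_l(x,\widehat\alpha^j_l)-\hat y_l(x,\alpha^j)]$. For the algorithm's own choice the estimated regret collapses to the single IGW slot (the greedy slots have zero estimated gap by construction), and only then does the elementary IGW inequality close the argument, yielding the per-round bound $(k+2)(A-k+1)/\gamma_l$ in \eqref{eq:epoch}. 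This induction is also what resolves the distribution-shift issue you elide in steps 3--4: the oracle guarantee $\phi_l^2$ is an average over the action distributions of the \emph{past} epochs $i<l$, not over the current $p_l$, and Lemma~\ref{lem:main} converts one into the other via Cauchy--Schwarz and a $1/p_i$ bound that the induction hypothesis controls. The correct proof of Theorem~\ref{thm:topkregret2} is then literally the proof of Theorem~\ref{thm:topkregret} with Lemma~\ref{lem:reg2} in place of Lemma~\ref{lem:reg}, giving $\phi_l^2=O\bigl(c^{-1}N_{l-1}^{-1}\log(|\cF|T/\delta)+\epsilon^2\bigr)$; the per-epoch bound \eqref{eq:epoch} is unchanged, and the extra $\epsilon^2$ inside $\gamma_l=\sqrt{A-k+1}/(32\phi_l)$ is what produces the additive $\epsilon kT\sqrt{A-k+1}$ when summed over the doubling epochs.
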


The proofs for both of our main theorems are provided in Appendix~\ref{sec:topkanalysis}. One of the key ingredients in the proof is an induction hypothesis which helps us relate the top-$k$ regret of a policy with respect to the estimated reward function $\pred_l \in \cF$ at the beginning of epoch $l$ to the actual regret with respect to $f^* \in \cF$. The argument can be seen as a generalization of \citep{simchi2020bypassing} to $k>1$. 


\section{\xtm Contextual Bandits and Arm Hierarchy} 
\label{sec:xtm}

When the number of arms $A$ is large, the goal is to design algorithms so that the computational cost per round is poly-logarithmic in $A$ (i.e. $\Ocal(\mathrm{polylog}(A))$) and so it the overall regret. However,
owing to known lower bounds~\citep{foster2020beyond}, this cannot be achieved without imposing
further assumptions on the contextual bandit problem.


{\bf Main idea.}  
A key observation is that the regression-oracle framework does not impose any restriction on the structure of the arms and in fact the set of arms can even be context-dependent. Let us assume that,
\begin{compactitem}
\item For each $x$, there is an $x$-dependent decomposition 
  \begin{equation}\label{eqn:xpart}
    \cA_x := \{ \ba_{x,1}, \cdots, \ba_{x,Z} \},
  \end{equation}
  where $\ba_{x,1}, \cdots, \ba_{x,Z}$ form a disjoint union of $\cA$ with $Z = \Ocal(\log A)$.
\item For any two arms $a$ and $a'$ from any subset $\ba_{x,i}$, the expected reward function
  $r(x,a)=\EE[r(a)|X=x]$ satisfies the following consistency condition
  \begin{equation}\label{eqn:consistent}
    |r(x,a) - r(x,a')| \le \epsilon.
  \end{equation}
\end{compactitem}
By treating $\ba_{x,1},\cdots,\ba_{x,Z}$ as effective arms, the results of Section
\ref{sec:topk_results} can be applied by {\em working with functions that are piecewise constant over each $\ba_{x,i}$}.  Such a context-dependent arm space decomposition is a reasonable assumption, because often the rewards from a large subset of arms exhibit minor variations for a given context $x$.

{\bf Motivating example.} To motivate and justify the conditions \eqref{eqn:xpart} and
\eqref{eqn:consistent}, consider a simple but representative setting where the contexts in $\Xcal$
and arms in $\cA$ are both represented as feature vectors in $\RR^d$ for a fixed dimension $d$ and
the distance between two vectors is measured by the Euclidean norm $\|\cdot\|$. In many
applications, the expected reward $r(x,a)$ satisfies the gradient condition $ |\partial_a r(x,a)| \le \frac{\eta}{\|x-a\|}$,
for some $\eta>0$, i.e., $r(x,a)$ is sensitive in $a$ only when $a$ is close to $x$ and
insensitive when $a$ is far away from $x$. 



Let us introduce a hierarchical decomposition $\Tcal$ for $\cA$, which in this case is a balanced
$2^d$-ary tree. At the leaf level, each tree node has a maximum number of $m$ arms from the extreme
arm space $\cA$. The height of such a tree is $H\approx\ceil{\log\ceil{A/m}}$ under some mild
assumptions on the distributions of the arms in $\cA$. For a specific depth $h$, we use $e_{h,i}$ to
denote a node with index $i$ at depth $h$ and $\cC_{h,i}$ to denote the $2^d$ children of $e_{h,i}$
at depth $h+1$. Each node $e_{h,i}$ of the tree is further equipped with a \textit{routing function}
$g_{h,i}(x)=\frac{\mathrm{rad}_{h,i}}{\|x-\mathrm{ctr}_{h,i}\|}$, where $\mathrm{ctr}_{h,i}$ is the
center of the node $e_{h,i}$ and $\mathrm{rad}_{h,i}$ is the radius of the smallest ball at
$\mathrm{ctr}_{h,i}$ that contains $e_{h,i}$. The center $\mathrm{ctr}_{h,i}$ serves as a
representative for the set of arms in $e_{h,i}$. Figure \ref{fig:hier} (left) illustrates the
hierarchical decomposition for the 1D case.

Given a context $x$, we perform an \textit{adaptive search} through this hierarchical decomposition
$\Tcal$, parameterized by a constant $\beta\in(0,1)$. Initially, the sets $I_x$ and $S_x$ are set to
be empty and the search starts from the root of the tree. When a node $e_{h,i}$ is visited, it is
considered {\em far from $x$} if
$g_{h,i}(x)=\frac{\mathrm{rad}_{h,i}}{\|x-\mathrm{ctr}_{h,i}\|}\le\beta$ and {\em close to $x$} if
$g_{h,i}(x)=\frac{\mathrm{rad}_{h,i}}{\|x-\mathrm{ctr}_{h,i}\|}>\beta$. If $e_{h,i}$ is far from
$x$, we simply place it in $I_x$. If $e_{h,i}$ close to $x$, we visit its children in $\cC_{h,i}$
recursively if $e_{h,i}$ is an internal node or place it in $S_x$ if it is a leaf. At the end of the
search, $I_x$ consists of a list of internal nodes and $S_x$ is a list of {\em singleton} arms.

We claim that the union of the singleton arms in $S_x$ and the nodes in $I_x$ form an
$x$-dependent decomposition $\cA_x$. First, the disjoint union of $I_x$ and $S_x$ covers the whole
arm space $\cA$. $S_x$ contains only $O(1)$ singleton arms with arm features close to the context
feature $x$ while the size of $I_x$ is bounded by $O(\log A)$ as there are at most $O(1)$ nodes
$e_{h,i}$ inserted into $I_x$ at each of the $O(\log A)$ levels. Hence, the sum of the cardinalities
of $S_x$ and $I_x$ is bounded by $Z=O(\log A)$, i.e., logarithmic in the size $A$ of the extreme arm
space $\cA$.

Second, for any two original arms $a_1,a_2$ corresponding to a node $e_{h,i} \in I_x$,
\[
|r(x,a_1)-r(x,a_2)| \le \|\partial_l r(x,a')\| \cdot \|a_1-a_2\| 
\le \frac{\eta}{\|x-a'\|}\cdot (2\ \mathrm{rad}_{h,i}),
\]
where $a'$ lies on the segment between $a_1$ and $a_2$. Since
\[
\|x-a'\|\ge\|x-\mathrm{ctr}_{h,i}\|-\|a'-\mathrm{ctr}_{h,i}\| \ge (1/\beta-1) \mathrm{rad}_{h,i}
\]
holds for $e_{h,i} \in I_x$,
\[
|r(x,a_1)-r(x,a_2)| \le \frac{\eta}{(1/\beta-1)\mathrm{rad}_{h,i}} \cdot (2\ \mathrm{rad}_{h,i}) = \frac{2\eta \beta}{1-\beta}.
\]
Hence, if one chooses $\beta$ so that $2\eta\beta/(1-\beta) \le \epsilon$, then
$|r(x,a_1)-r(x,a_2)|\le\epsilon$ for any two arms $a_1,a_2$ in any $e_{h,i}\in I_x$.

Therefore for each $x$, the union of the singleton arms in $S_x$ and the nodes in $I_x$ form an
$x$-dependent decomposition of $\cA$ that satisfies the
conditions \eqref{eqn:xpart} and \eqref{eqn:consistent}. Figure \ref{fig:hier} (middle) shows the
decomposition for a given context $x$, while Figure \ref{fig:hier} (right) shows how the
decomposition varies with the context $x$. In what follows, we shall refer to the members of $I_x$
{\em node effective arms} and the ones of $S_x$ {\em singleton effective arms}.

\begin{figure}[ht!]
  \centering
  \includegraphics[scale=0.4]{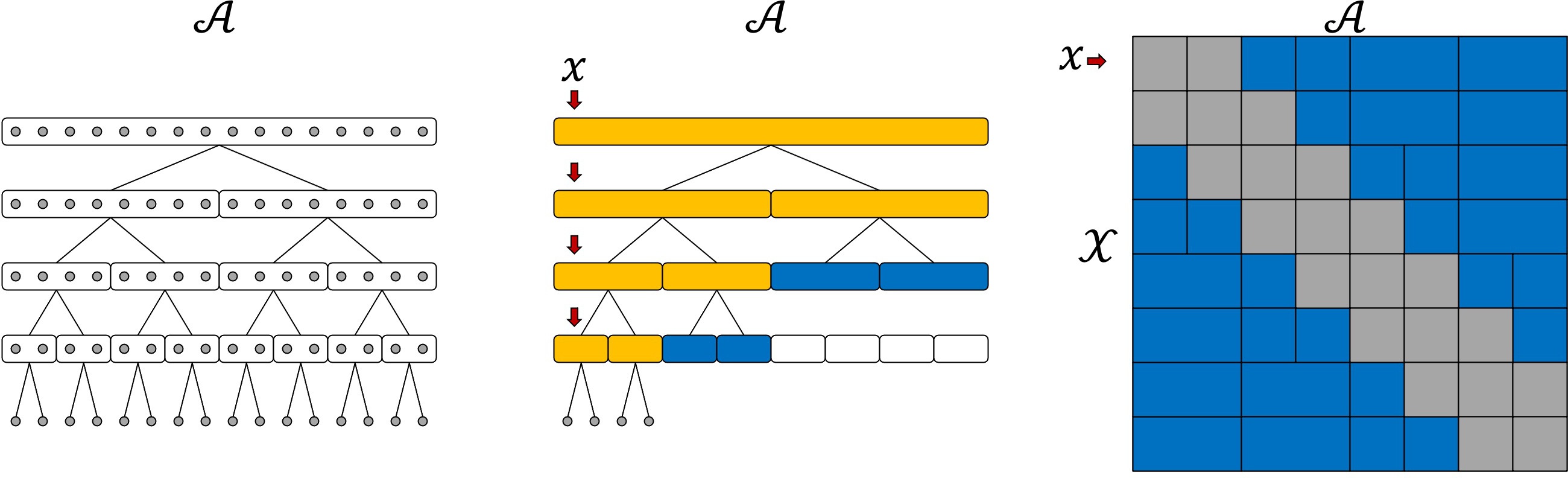}
  \caption{\small {\bf Left}: an illustration of the hierarchical decomposition for $\cA$, where each gray dot indicates an arm. {\bf Middle}: the adaptive search for a given context $x$. The yellow nodes are further explored as they are close to $x$ while the blue nodes are not as they are far from $x$. The set of effective arms for $x$ consists of the blue nodes and the singleton arms in the yellow leaf nodes.  {\bf Right}: For a fixed $x$, the corresponding row shows the $x$-dependent hierarchical arm space decomposition. As $x$ varies, the decomposition also changes. Each blue block stands for a non-singleton effective arm, valid for a contiguous block of contexts. Each gray block contains the singleton effective arms, valid again for a contiguous block of contexts. }
  \label{fig:hier}
\end{figure}

{\bf General setting.} Based on the motivating example, we propose an arm hierarchy for general
$\Xcal$ and $\cA$. We assume access to a hierarchical partitioning $\Tcal$ of $\cA$ that breaks
progressively into finer subgroups of similar arms. The partitioning can be represented by a balanced
tree that is $p$-ary till the leaf level. At the leaf level, each node can have a maximum of $m>p$
children, each of which is a singleton arm in $\cA$. The height of such a tree is
$H=\ceil{\log_p\ceil{A/m}}$. With a slight abuse of notation, we use $e_{h,i}$ to denote a node in the tree as well as the subset of singleton arms in the subtree of the node.

Each internal node $e_{h,i}$ is assumed to be associated with a routing function $g_{h,i}(x)$ mapping $\cX\rightarrow [0, 1]$ and $\cC_{h,i}$ is used to denote the immediate children of node $e_{h,i}$.  Based on these routing functions and an integer parameter $b$, we define a
beam search in Algorithm~\ref{alg:beam} for any context $x\in \cX$ as an input. During its
execution, this beam search keeps at each level $h$ only the top $b$ nodes that return the highest
$g_{h,i}(x)$ values. The output of the beam search, denoted also by $\cA_x$, is the union of a set of
nodes denoted as $I_x$ and a set of singleton arms denoted as $S_x$.  The tree structure ensures
that there are at most $bm$ singleton arms in $S_x$ and at most $(p-1)b(H-1)$ nodes in
$I_x$. Therefore, $\lvert \cA_x\rvert\leq(p-1)b(H-1)+bm = O(\log A)$, implying that $\cA_x$
satisfies \eqref{eqn:xpart}. Though the cardinality $\lvert\cA_x\rvert$ can vary slightly
depending on the context $x$, in what follows we make the simplifying assumption that
$\lvert\cA_x\rvert$ is equal to a constant $Z=O(\log A)$ independent of $x$ and denote $\cA_x=\{\ba_{x,1},\ldots, \ba_{x,Z}\}$.

\begin{algorithm}[H]
\small
  \caption{Beam search}
  \label{alg:beam}
  \begin{algorithmic}[1]
    \STATE {\bf Arguments}:  beam-size $b$, $\Tcal$, routing functions $\{g\}$, $x$
    \STATE Initialize $\texttt{codes} = [(1,1)]$ and $I_x^{b} = \emptyset$. 
    \FOR {$h = 1, \cdots, H - 1$}
    \STATE Let $\texttt{labels} = \cup_{(h-1, i) \in \texttt{codes}} \Ccal_{h-1,i}$. 
    \STATE Let $\texttt{codes}$ be  top-$b$ nodes in $\texttt{labels}$ according to the values $g_{h,i}(x)$. 
    \STATE  Add the nodes in $\texttt{labels}  \setminus \texttt{codes}$ to $I_x$.
    \ENDFOR
    \STATE  Let $S_x = \cup_{(H-1, i) \in \texttt{codes}} \Ccal_{H-1,i}$.
    \STATE Return $\cA_x = S_x \cup I_x$. 
  \end{algorithmic}
\end{algorithm}

To ensure the consistency condition \eqref{eqn:consistent} in the general case, one requires the expected reward function $r(x,a)$ to be nearly constant over each effective arm $\ba_{x,i}$ and
work with a function class that is constant over each $\ba_{x,i}$. The following definition formalizes this.

\begin{definition}
  \label{def:consistent}
  Given a hierarchy $\Tcal$ with routing function family $\{g_{h,i}(\cdot)\}$ and a beam-width $b$,
  a function $f(x,a)$ is $(\Tcal,g,b)$-constant if for every $ x \in \Xcal$
  \begin{align*}
    f(x, a) = f(x, a') \quad\text{for all}\quad a ,a' \in e_{h,i},
  \end{align*}
  for any node $e_{h,i}$ in $I_x \subset \cA_x$.  A class of functions $\Fcal$ is
  $(\Tcal,g,b)$-constant if each $f\in\Fcal$ is $(\Tcal,g,b)$-constant.
\end{definition}

Figure \ref{fig:consistent} (left) provides an illustration of a $(\Tcal,g,b)$-constant predictor
function for the simple case $\Xcal\subset[0,1]$ and $\cA\subset[0,1]$. In the \xtm setting, we always assume that our function class $\Fcal$ is $(\Tcal,g,b)$-constant. By further assuming that the expected
reward $r(x,a)$ satisfies either Assumption~\ref{asum:realizability} or
Assumption~\ref{asum:realizability2}, Condition~\eqref{eqn:consistent} is satisfied.

\begin{figure}[ht!]
  \centering
  \includegraphics[scale=0.40]{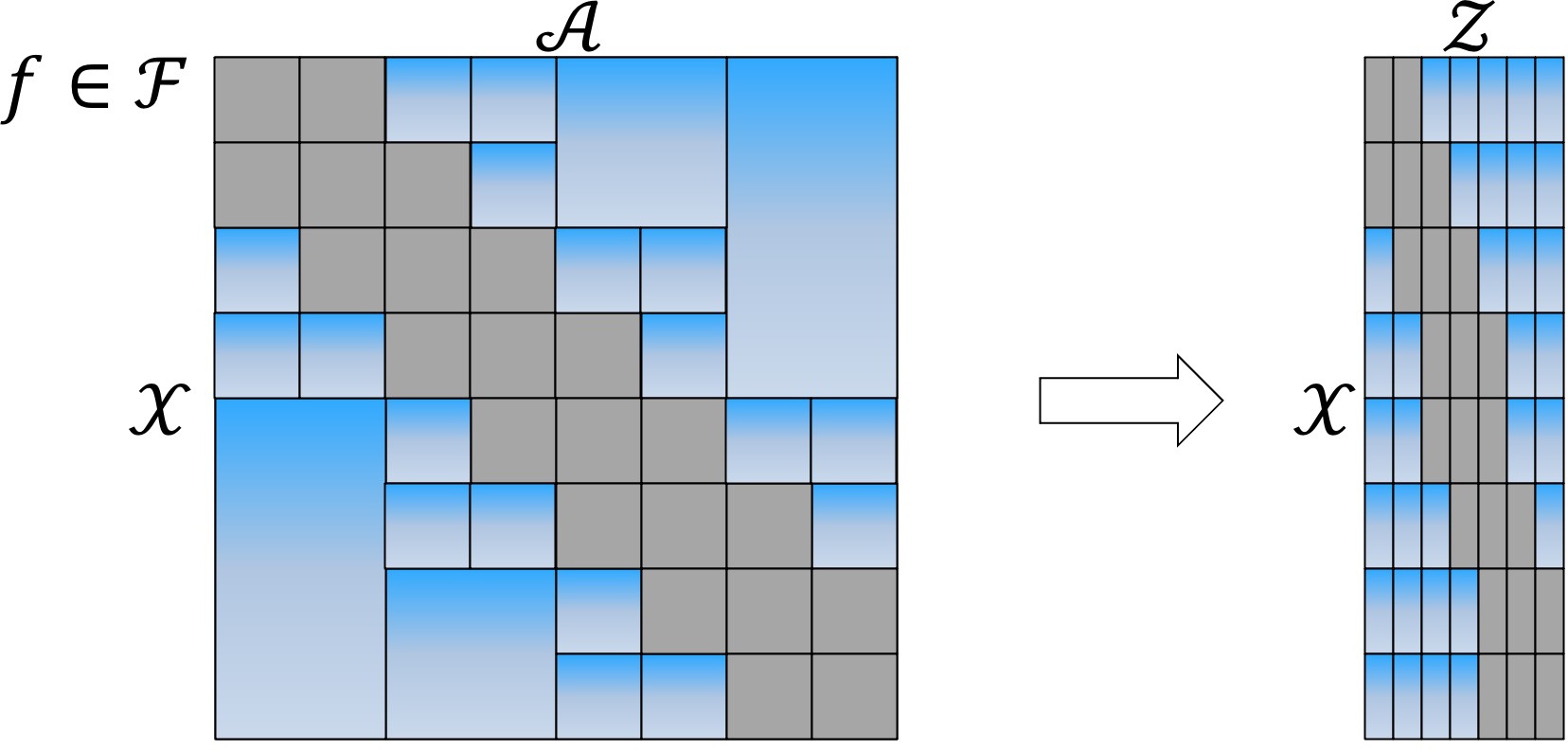}
  \caption{\small {\bf Left}: A $(\Tcal,g,b)$-constant predictor function $f(x,a)$ in the 1D motivating example with $\Xcal\subset[0,1]$ and $\cA\subset[0,1]$. Within each blue block, $f(x,a)$ is constant in $a$ but varies with $x$. {\bf Right}: the function $\ftil$ after the reduction.
  }
  \label{fig:consistent}
\end{figure}
  




\subsection{\igw for top-$k$ \xtm Contextual Bandits}
\label{sec:algo2}
In this section we provide our algorithm for the \xtm setting. As Definition~\ref{def:consistent} reduces the \xtm problem with $A$ arms to a non-extreme problem with only
$Z=\cO(\log A)$ effective arms, Algorithm~\ref{alg:hier} essentially uses the beam-search method in
Algorithm~\ref{alg:beam} to construct this reduced problem. The \igw randomization is performed over the
effective arms and if a non-singleton arm (i.e., an internal node of $\cT$) is chosen, we
substitute it with a randomly chosen singleton arm that lies in the sub-tree of that node.  More specifically, for a $(\Tcal,g,b)$-constant class $\Fcal$, we
define for each $f\in \Fcal$ a new function $\ftil: \cX \times [Z] \rightarrow [0,1]$ s.t.
for any $z=1,\ldots, Z$ we have $\ftil(x,z) = f(x,a) \;\text{for some fixed}\; a\in \ba_{x,z}$.
Here, we assume that for any $x$ the beam-search process in Algorithm~\ref{alg:beam} returns the
effective arms in $\cA_x$ in a fixed order and $\ba_{x,z}$ is the $z$-th arm in this order. The
collection of these new functions over the context set $\Xcal$ and the reduced arm space $\Zcal=[Z]$
is denoted by $\cFtil = \{\ftil: f\in\Fcal\}$. Figure \ref{fig:consistent} (right) provides an
illustration of a function $\ftil(x,z)$ obtained after the reduction.



\begin{algorithm}[!ht]
\small
\caption{\xtm Top-$k$ Contextual Bandits with \igw}
\label{alg:hier}
\begin{algorithmic}[1]
\STATE {\bf Arguments:~} $k$, number of explore slots: $ 1 \leq r \leq k $ 
  \FOR{$l \gets 1$ \textbf{to} $e(T)$} 
    \STATE Fit regression oracle to all past data
    \STATE $\pred_l = \argmin_{\ftil\in\cFtil} \sum_{t=1}^{N_{l-1}} \sum_{z\in \feedbackset_t} (\ftil(x_{t}, z)-\tilde{r}_{t}(z))^2$
    \FOR {$s \gets N_{l-1}+1$ \textbf{to} $N_{l-1} + n_{l}$}
      \STATE Receive $x_s$
      \STATE Use Algorithm~\ref{alg:beam} to get $\cA_{x_s}=\{\ba_{x_s,1},\ldots,\ba_{x_s,Z}\}$.
      \STATE Let $z_1,\ldots,z_Z$ be the arms in $[Z]$ in the descending order according to $\pred_l$.
      \STATE $\cZ_s = \{z_1, \cdots, z_{k-r}\}$.
      \FOR {$c \gets 1$ \textbf{to} $r$}
        \STATE Compute randomization distribution 
        \STATE $p= \igw\left( [Z] \setminus \cZ_s; \hat{y}_l(x_s, \cdot)\right)$. 
        \STATE Sample $z\sim p$. Let $\cZ_s = \cZ_s \cup \{z\}$.
      \ENDFOR
      \STATE $B_s = \{\}$.
      \FOR {$z$ in $\mathcal{Z}_s$}
        \STATE If $\ba_{x_s,z}$ is singleton arm, then add it to $B_s$.
        \STATE Otherwise sample a singleton arm $a$ in the subtree rooted at the node $\ba_{x_s,z}$ and add $a$ to $B_s$.
        \ENDFOR
      \STATE Choose the arms in $B_s$.
      \STATE Map the rewards back to the corresponding effective arms in $\mathcal{Z}_s$ and record $\{\tilde{r}_s(z), z\in \feedbackset_s\}$.
    \ENDFOR
  \STATE Let $N_{l} = N_{l-1} + n_l$
  \ENDFOR
\end{algorithmic}
\end{algorithm}

As a practical example, we can maintain the function class $\Fcal$ such that each member $f\in\Fcal$ is represented as a set of regressors at the internal nodes as well as the singleton arms in the tree. These regressors map contexts to $[0,1]$. For an $f\in\Fcal$, the regressor at each node is constant over the arms $a$ within this node and is only trained on past samples for which that node was selected
as a whole in $\mathcal{Z}_s$ in Algorithm~\ref{alg:hier}; the regressor at a
singleton arm can be trained on all samples obtained by choosing that arm. Note that even though we
might have to maintain a lot of regression functions, many of them can be sparse if the input
contexts are sparse, because they are only trained on a small fraction of past training samples.

\subsection{Top-$k$ Analysis in the \xtm Setting}
\label{sec:hier_analysis}
We can analyze Algorithm~\ref{alg:hier} under the realizability assumptions
(Assumption~\ref{asum:realizability} or Assumption~\ref{asum:realizability2}) when the class of
functions satisfies Definition~\ref{def:consistent}). Our main result is a reduction style argument
that provides the following corollary of Theorems~\ref{thm:topkregret} and~\ref{thm:topkregret2}.

\begin{corollary}
\label{cor:xtm}
Algorithm~\ref{alg:hier} when run with parameter $r=1$ has the following regret guarantees:

{\it (i)} If Assumptions~\ref{asum:realizability} and \ref{asum:feedback} hold and the function class $\cF$ is $(\cT, g, b)$-constant (Definition~\ref{def:consistent}), then setting parameters as in Theorem~\ref{thm:topkregret} ensures that the regret bound stated in Theorem~\ref{thm:topkregret} holds 
with $A$ replaced by $O(\log A)$.

{\it (ii)} If Assumptions~\ref{asum:realizability2} and \ref{asum:feedback} hold and the function class $\cF$ is $(\cT, g, b)$-constant (Definition~\ref{def:consistent}), then setting parameters as in Theorem~\ref{thm:topkregret2} ensures that the regret bound stated in Theorem~\ref{thm:topkregret2} holds with $A$ replaced by $O(\log A)$.


\end{corollary}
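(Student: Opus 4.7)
The proof is a direct reduction: Algorithm~\ref{alg:hier} is exactly Algorithm~\ref{alg:topk} applied to a reduced contextual bandit problem on the set of effective arms. I would set up this reduced problem with arm set $[Z]$, where $Z = O(\log A)$ is the cardinality of $\cA_x$ produced by the beam search of Algorithm~\ref{alg:beam}, and with function class $\cFtil = \{\ftil : f \in \cF\}$, where $\ftil(x,z) := f(x,a)$ for any $a \in \ba_{x,z}$. The $(\cT,g,b)$-constant property of $\cF$ makes this definition unambiguous, and plainly $|\cFtil| \leq |\cF|$.

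The next step would be to verify the hypotheses of Theorem~\ref{thm:topkregret} (resp.\ Theorem~\ref{thm:topkregret2}) in the reduced problem. Defining the reduced reward $\tilde r_t(z)$ as the reward of the singleton that Algorithm~\ref{alg:hier} draws from $\ba_{x_t,z}$, the $(\cT,g,b)$-constant property combined with Assumption~\ref{asum:realizability} (resp.\ Assumption~\ref{asum:realizability2}) gives $\EE[\tilde r_t(z)\mid x_t] = \ftil^*(x_t, z)$ (resp.\ up to the $\epsilon$ deviation) with $\ftil^* \in \cFtil$. The feedback condition lifts with the same constant $c$, since feedback on a singleton inside $B_s$ is the same event as feedback on its effective arm in $\cZ_s$. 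Because Algorithm~\ref{alg:hier} performs \igw sampling, greedy top-$(k-r)$ selection, and epoch updates over $[Z]$ in precisely the same manner as Algorithm~\ref{alg:topk}, invoking Theorem~\ref{thm:topkregret} (resp.\ Theorem~\ref{thm:topkregret2}) on the reduced problem with $A \to Z = O(\log A)$ and $|\cF| \to |\cFtil|$ yields the claimed bound on a reduced regret $\widetilde R(T)$ measured against the top-$k$ \emph{effective arms} under $\ftil^*$.

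The main obstacle --- and essentially the only step not handled by a mechanical substitution --- is to show that this reduced regret controls the original regret $R(T)$, which is defined against the top-$k$ \emph{singletons} under $f^*$. The $(\cT,g,b)$-constant property is used essentially here: since every singleton inside a node effective arm $\ba_{x,z}$ carries the same expected reward $\ftil^*(x,z)$, an optimal top-$k$ singleton selection can be realized, without loss of generality, by a selection whose members lie in $k$ distinct effective arms, which coincides with the reduced top-$k$. This identification equates the two optima, and the per-round expected reward of Algorithm~\ref{alg:hier}'s singleton choices equals the reward of its effective-arm counterpart by construction of the sampling step. Therefore $R(T) = \widetilde R(T)$, and the corollary follows. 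I expect the bookkeeping around this identification --- in particular, verifying that ties in $f^*$ across singletons in a common $\ba_{x,z}$ can be broken so the tie-broken optimum uses distinct effective arms --- to be the only place in the proof requiring care.
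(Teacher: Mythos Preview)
Your first two paragraphs are correct and match the paper's proof essentially line for line. The paper's argument is equally terse: it observes that, with the deterministic ordering of effective arms, the reduced system over $[Z]$ with function class $\cFtil$ is itself a $Z$-armed top-$k$ contextual bandit satisfying realizability (respectively $\epsilon$-realizability), and then simply invokes Theorem~\ref{thm:topkregret} (respectively Theorem~\ref{thm:topkregret2}). The paper also records $|\cFtil|=|\cF|$ rather than your $|\cFtil|\le|\cF|$, but either suffices for the bound.

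Your third paragraph, however, attempts something the paper does not, and the central claim there is incorrect. You assert that, under realizability, an optimal top-$k$ singleton selection can always be realized by a choice whose members lie in $k$ \emph{distinct} effective arms, so that the original and reduced optima coincide. This fails in general: a node effective arm $\ba_{x,z}\in I_x$ may contain many singletons, all carrying the common value $\ftil^*(x,z)$. If that value strictly exceeds the value of every other effective arm and the node contains at least $k$ singletons, then every optimal top-$k$ singleton selection draws all $k$ arms from this one node, whereas the reduced problem can select the node only once. In that scenario the original benchmark is strictly larger than the reduced benchmark, and $R(T)\ne\widetilde R(T)$. The paper sidesteps this entirely: its proof, and hence implicitly the corollary, establishes the regret bound only for the reduced $Z$-armed system, not for the original $A$-armed regret. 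Your instinct that this identification is the delicate point is right; the resolution you propose does not go through, and no such resolution is claimed in the paper.
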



\section{Empirical Results}
\label{sec:sims}
We compare our algorithm with well known baselines on various real world datasets. We first perform a semi-synthetic experiment in a realizable setting. Then we use eXtreme Multi-Label Classification (XMC)~\citep{Bhatia16} datasets to test our reduction scheme. The different exploration sampling strategies used in our experiments are~\footnote{Note that all these exploration strategies have been extended to the top-$k$ setting using the ideas in Algorithm~\ref{alg:topk} and many popular contextual bandit algorithms like the ones in~\citep{bietti2018contextual} cannot be easily extended to the top-$k$ setting.}: {\bf Greedy-topk: } The top-$k$ effective arms for each context are chosen greedily according to the regression score; {\bf Boltzmann-topk: }  The top-$(k-r)$ arms are selected greedily. Then the next $r$ arms are selected one by one, each time recomputing the Boltzmann distribution over the remaining arms. Under this sampling scheme the probability of sampling arm $\atil$ is proportional to $\exp(\log (N_{l-1})\beta \ftil(x, \atil))$~\citep{cesa2017boltzmann}; {\bf $\epsilon$-greedy-topk: } Same as above but the last $r$ arms are selected one by one using a scheme where the probability of sampling arm $\atil$ is proportional to $(1 - \epsilon) + \epsilon/A'$ if $\atil$ is the arm with the highest score, otherwise the probability is $\epsilon/A'$ where $A'$ is the number of arms remaining; {\bf \igw-topk: } This is essentially the sampling strategy in Algorithm~\ref{alg:topk}. We set $\gamma_l = \sqrt{CN_{l-1}A'}$ for the $l$-th epoch where $A'$ is the number of remaining arms. 

\begin{figure*}
  \centering
  \subfloat[Realizable eurlex-4k]{\label{fig:rela}\includegraphics[width=0.40\linewidth]{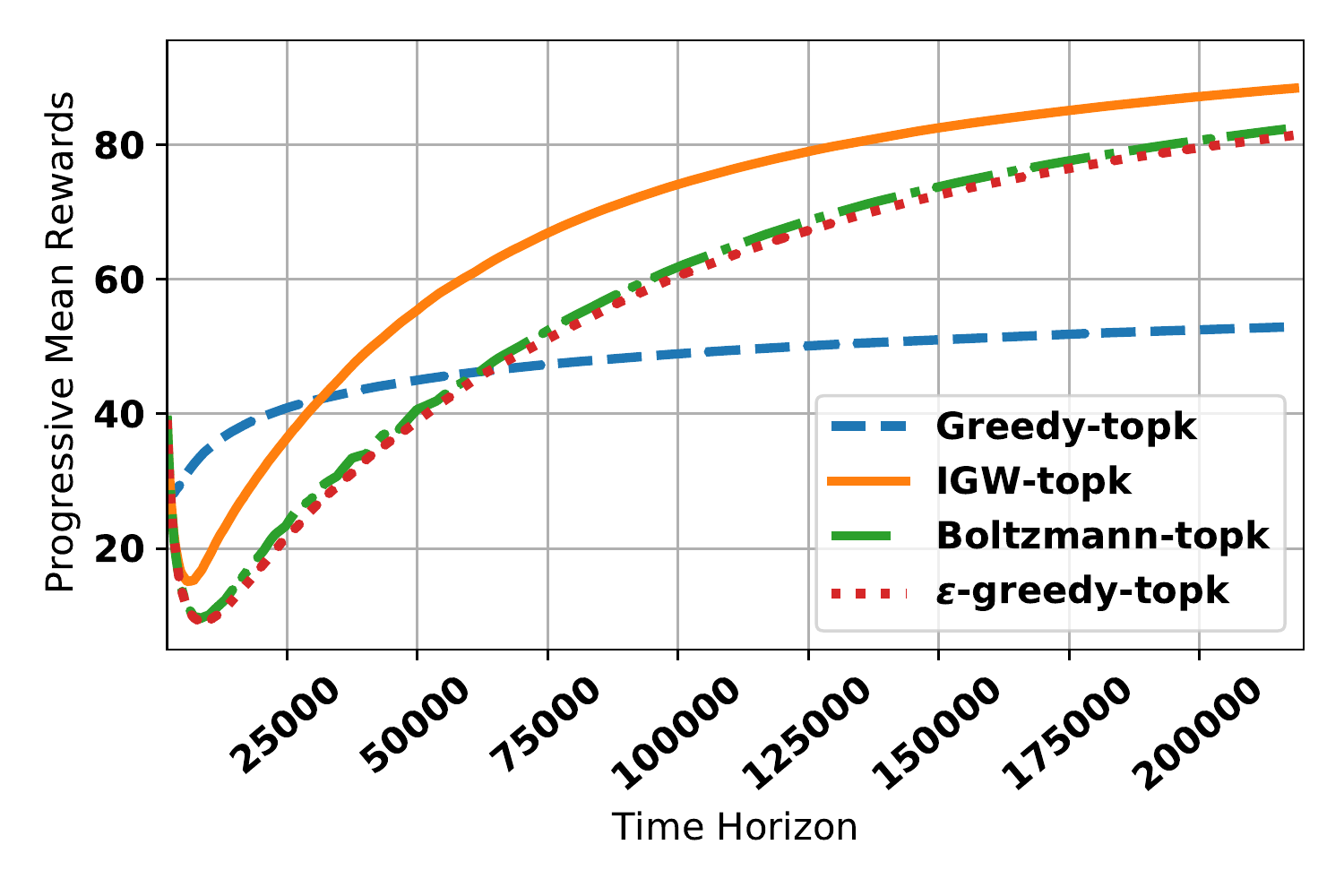}} \hfill
   \subfloat[Inference Time per context on amazon-3m]{\label{fig:inftimes}
   \resizebox{0.40\linewidth}{!}{%
    \begin{tabular}{lc}
    \toprule
    Beam Size (b) &  Inference Time (ms)\\
    \midrule
    10 & 7.85  \\
    30 & 12.84 \\
    100 & 27.83 \\
    2.9K (all arms) & 799.06 \\
    \bottomrule
  \end{tabular}
  }
  } \\
  \subfloat[eurlex-4k]{\label{figur:e4k}\includegraphics[width=0.40\linewidth]{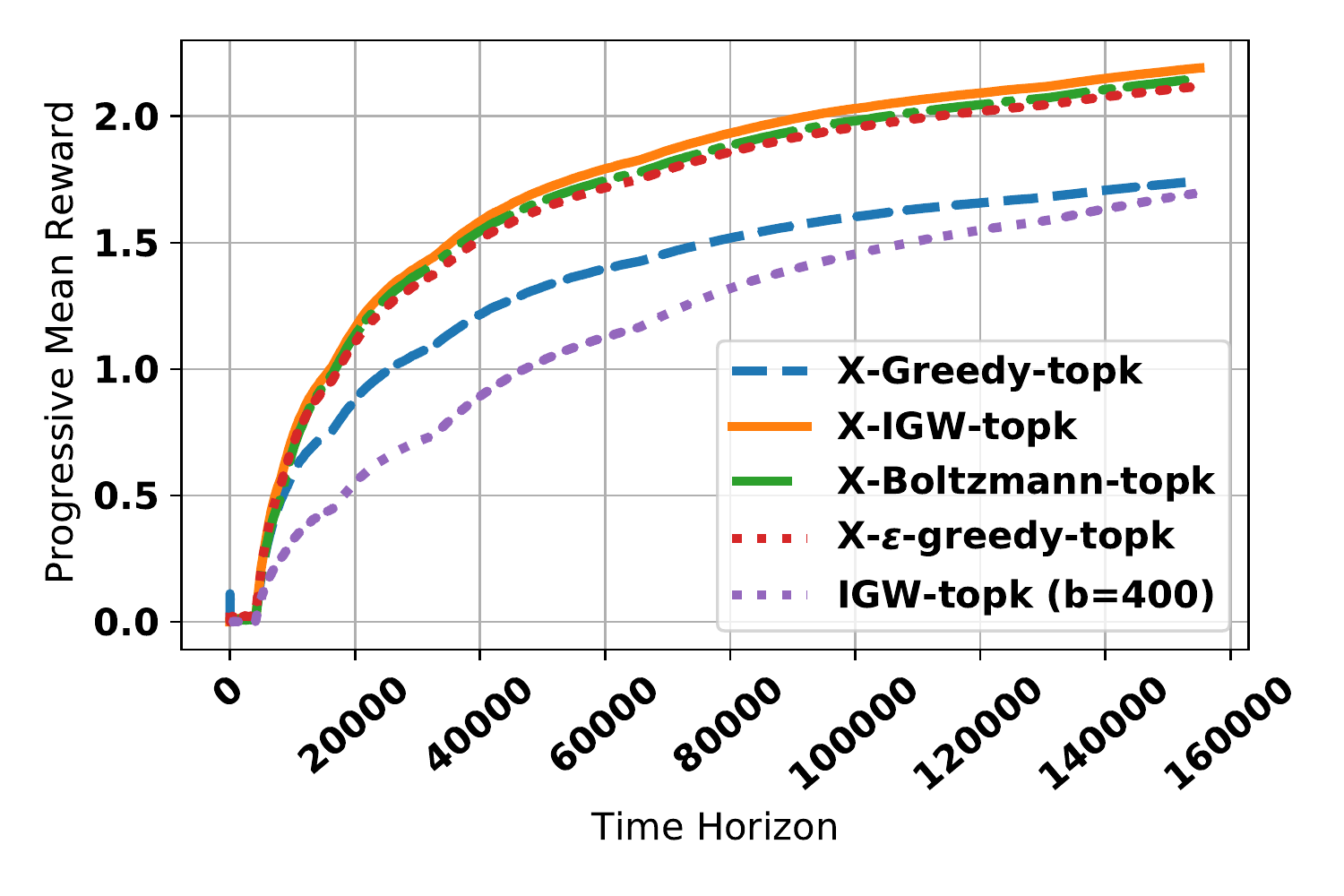}} \hfill
  \subfloat[amazon-3m]{\label{figur:670k}\includegraphics[width=0.40\linewidth]{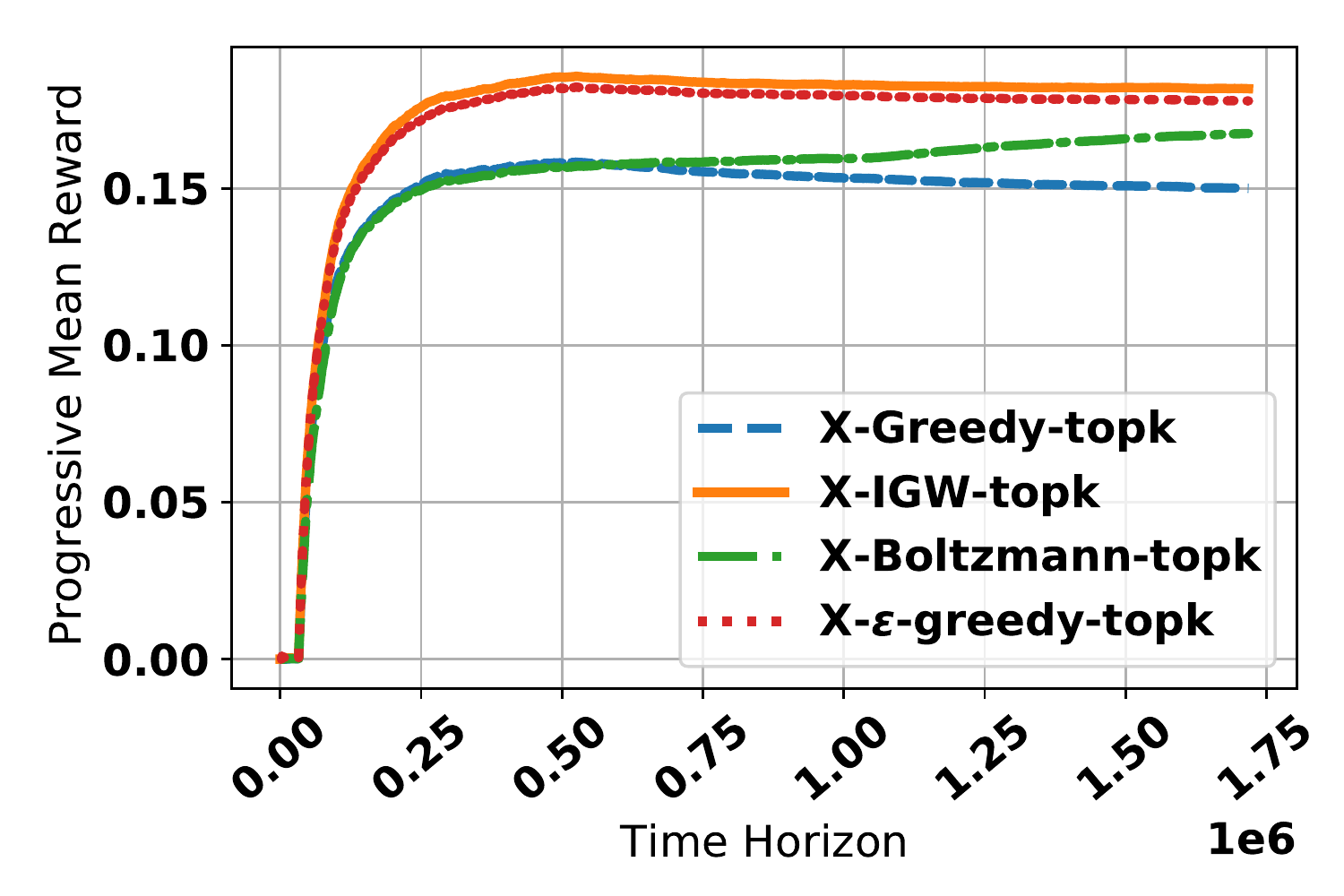}}\\
 \caption{\small In (a) we compare the different sampling strategies on a realizable setting with $k=50$ and $r=25$, derived from the eurlex-4k dataset. In (b) we compare the avg. inference times per context vs different beam sizes on the amazon-3m dataset. Note that for this dataset $b=290,000$ will include all arms in the beam in our setting and is order wise equivalent to no hierarchy. This comparison is done for inference in a setting with $k=5, r=3$. Note that for larger datasets in Table~\ref{tab:stats} our implementation with $b=10, 30$ remains efficient for real-time inference as the time-complexity scales only with the beams-size and the height of the tree. We plot the progressive mean rewards collected by each algorithm as a function of time in two of our 6 datasets in (c)-(d) where the algorithms are implemented under our \xtm reduction framework. In our experiments in (c)-(d) we have $k=5$ and $r=3$. The beam size is 10 except for IGW-topk (b=400) in (c), which serves as a proxy for Algorithm~\ref{alg:topk} without the extreme reduction, as $b=400$ includes all the arms in this dataset.}
 \label{fig:exp}
\end{figure*}

{\bf Realizable Experiment. }
In order to create a realizable setting that is realistic, we choose the eurlex-4k XMC dataset~\citep{Bhatia16} in Table~\ref{tab:stats} and for each arm/label $a\in A$, we fit linear regressor weights $\nu^*_a$ that minimizes $\EE_x[([x; 1.0]^T\nu^*_a$ - $\E[r_a(t) | x])^2]$ over the dataset. Then we consider a derived system where $\E[r_a(t) | x] = [x; 1.0]^T\nu^*_a$ for all $x, a$ that is the learnt weights from before exactly represent the mean rewards of the arms. This system is then realizable for Algorithm~\ref{alg:topk} when the function $\cF$ is linear. Figure~\ref{fig:rela} shows the progressive mean reward (sum of rewards till time $t$ divided by $t$) for all the sampling strategies compared. We see that the \igw sampling strategy in Algorithm~\ref{alg:topk} outperforms all the others by a large margin. For more details please refer to Appendix~\ref{sec:pecos}. Note that the hyper-parameters of all the algorithms are tuned on this dataset in order to demonstrate that even with tuned hyper-parameter choices \igw is the optimal scheme for this realizable experiment. The experiment is done with $k=50, r=25$ and $b=10$.

{\bf \xtm Experiments. }
We now present our empirical results on \xtm multi-label datasets. Our experiments are performed under simulated bandit feedback using real-world \xtm multi-label classification datasets~\citep{Bhatia16}. This experiment startegy is widely used in the literature~\citep{agarwal2014taming, bietti2018contextual} with non-\xtm multi-class datasets (see Appendix~\ref{sec:pecos} for more details). Our implementation uses a hierarchical linear function class inspired by~\citep{pecos2020}. The hyper-parameters in all the algorithms are tuned on the eurlex-4k datasets and then held fixed. This is in line with~\citep{bietti2018contextual}, where the parameters are tuned on a set of datasets and then held fixed.

The tree and routing functions are formed using a small held out section of the datasets, whose sizes are specified in Table~\ref{tab:stats} (Initialization Size). In the interest of space we refer the readers to Appendix~\ref{sec:pecos} for more implementation details.

 We use 6 XMC datasets for our experiments. Table~\ref{tab:stats} provides some basic properties of each dataset. We can see that the number of arms in the largest dataset is as large as 2.8MM. The column Initialization Size denotes the size of the held out set used to intialize our algorithms. Note that for the datasets eurlex-4k and wiki10-31k we bootstrap the original training dataset to a larger size by sampling with replacement, as the original number of samples are too small to show noticeable effects.

\begin{table}[ht!]
    \centering
    \resizebox{0.7\linewidth}{!}{%
\begin{tabular}{lrrrr}
\toprule
      Dataset &  Initialization Size &  Time-Horizon &  No. of Arms & Max. Leaf Size (m) \\
\midrule
     eurlex-4k &                 5000 &        154490 &         4271  & 10\\
 amazoncat-13k &                 5000 &       1186239 &        13330  & 10 \\
    wiki10-31k &                 5000 &        141460 &        30938 & 10 \\
     wiki-500k &                20000 &       1779881 &       501070 & 100 \\
  amazon-670k &                20000 &        490449 &       670091 & 100 \\
     amazon-3m &                50000 &       1717899 &      2812281 & 100 \\
\bottomrule
\end{tabular}
\caption{\small Properties of \xtm Datasets}
\label{tab:stats}
}
\end{table}

\begin{table}[ht!]
    \centering
    \resizebox{0.7\linewidth}{!}{%
\begin{tabular}{lllll}
\toprule
{} &  X-Greedy & X-\igw-topk & X-Boltzmann-topk & X-$\epsilon$-greedy-topk \\
\midrule
X-Greedy         &         - &   0W/0D/6L &         1W/0D/5L &       0W/1D/5L \\
X-\igw-topk       &  6W/0D/0L &          - &         4W/1D/1L &       6W/0D/0L \\
X-Boltzmann-topk &  5W/0D/1L &   1W/1D/4L &                - &       3W/0D/3L \\
X-$\epsilon$-greedy-topk   &  5W/1D/0L &   0W/0D/6L &         3W/0D/3L &              - \\
\bottomrule
\end{tabular}
\caption{ \small Win/Draw/Loss statistics among algorithms for the 6 datasets. When the difference in results between two algorithms is not significant according to the statistical significance formula in~\citep{bietti2018contextual} then it is deemed to be a draw.} 
\label{tab:results}
}
\end{table}

We plot the progressive mean rewards (total rewards collected till time $t$ divided by $t$) for all the algorithms in Figure~\ref{fig:exp} (c)-(d) for two datasets. The rest of the plots are included in Figure~\ref{fig:expapp} in Appendix~\ref{sec:morexp}. The algorithm names are prepended with an $X$ to denote that the sampling is performed under the reduction framework of Algorithm~\ref{alg:hier}. In our experiments the number of arms allowed to be chosen each time is $k=5$. In Algorithm~\ref{alg:hier} we set the number of explore slots $r=3$ and beam-size $b=10$ (unless otherwise specified). We see that all the exploratory algorithms do much better than the greedy version i.e our \xtm reduction framework works for structured exploration when the number of arms are in thousands or millions. The efficacy of the reduction framework is further demonstrated by X-\igw-topk(b=10) being better than \igw-topk (b=400) by 29\% in terms of the mean reward, in Figure~\ref{figur:e4k}. Note that here \igw-topk(b=400) serves as a proxy for Algorithm~\ref{alg:topk} directly applied without the hierarchy, as the beam includes all the arms. The \igw scheme is always among the top $2$ strategies in all datasets. It is the only strategy among the baselines that has optimal theoretical performance and this shows that the algorithm is practical. Table~\ref{tab:results} provides Win(W)/Draw(D)/Loss(L) for each algorithm against the others. We use the same W/D/L scheme as in~\citep{bietti2018contextual} to create this table. Note that X-\igw-topk has the highest win percentage overall. In Figure~\ref{fig:inftimes} we compare the inference times for \igw of our hierarchical linear implementation for different beam-sizes on amazon-3m. Note that $b=2.9 K$ will include all arms in this dataset and is similar to a flat hierarchy. This shows that our algorithm will remain practical for real time inference on large datasets when $b \leq 30$ is used.

 \section{Discussion}
 We provide regret guarantees for the top-$k$ arm selection problem in realizable contextual bandits with general function classes. The algorithm can be theoretically and practically extended to extreme number of arms under our proposed reduction framework which models a practically motivated arm hierarchy. We benchmark our algorithms on XMC datasets under simulated bandit feedback. There are interesting directions for future work, for instance extending the analysis to a setting where the reward derived from the $k$ arms is a set function with interesting structures such as sub-modularity. It would also be interesting to analyze the \xtm setting where the routing functions and hierarchy can be updated in a data driven manner after every few epochs.
\FloatBarrier

\bibliography{xcb}
\clearpage
\appendix

\section{Top-$k$ Analysis}
\label{sec:topkanalysis}
\paragraph{Notation:}
Let $l$ denote epoch index with $n_l$ time steps. Define $N_{l}=\sum_{i=1}^l n_i$. At the beginning of each epoch $l$, we compute $\pred_l(x,a)$ as regression with respect to past data,
$$\pred_l = \argmin_{f\in\cF} \sum_{t=1}^{N_{l-1}} \sum_{a\in \feedbackset_t} (f(x_{t}, a)-r_{t}(a))^2,$$
where $\feedbackset_t$ is the subset for which the learner receives feedback. 

Let $\{\phi_l\}_{l\geq 2}$ be a sequence of numbers. The analysis in this section will be carried out under the event
\begin{align}
    \label{eq:big_event}
    \cE = \left\{ l \geq 2: \frac{2}{N_{l-1}}\sum_{s = 1}^{N_{l-1}} \EE_{x_s,\chosenset_s}\left\{ \frac{1}{k}\sum_{a\in\chosenset_s}  (\pred_l(x_s,a) - f^*(x_s,a))^2  \vert \cH_{s-1} \right\}
    \leq \phi_l^2 \right\}
\end{align}
Lemmas~\ref{lem:reg} and \ref{lem:reg2} compute $\phi_l$ for finite class $\cF$, such that event $\cE$ holds with high probability. 

We define $\gamma_l = \sqrt{A-k+1}/(32\phi_l)$, the scaling parameter used by Algorithm~\ref{alg:topk}. In this paper, we analyze Algorithm~\ref{alg:topk} with $r=1$, i.e. our procedure deterministically selects top $k-1$ actions of $\pred_l$ and selects the remaining action according to Inverse Gap Weighting on the remaining coordinates. 

A deterministic strategy $\alpha$ is a map $\alpha:\cX\to\cA$. Throughout the proofs, we employ the following shorthand to simplify the presentation. We shall write $\pred_i(x,\alpha)$ and $f^*(x,\alpha)$ in place of $\pred_i(x,\alpha(x))$ and $f^*(x,\alpha(x))$. We reserve the letter $\alpha$ for a strategy and $a$ for an action.

Given $x$, we let $\widehat{\alpha}_l^j(x)$ be the $j$-th highest action according to $\pred_l(x,\cdot)$. Similarly, $\alpha^{*,j}(x)$ is the $j$-th highest action according to $f^*(x,\cdot)$. We say that the set of strategies $\alpha^1,\ldots,\alpha^k$ is non-overlapping if for any $x$ the set $\{\alpha^1(x),\ldots,\alpha^k(x)\}$ is a set of distinct actions. Let $e(s)$ denote the epoch corresponding to time step $s$. 

Our argument is based on the beautiful observation of \citep{simchi2020bypassing} that one can analyze \igw inductively, by controlling the differences between estimated gaps (to the best estimated action) and the true gaps (to the best true action in the given context), with a \textit{mismatched factor of $2$}. We extend this technique to top-$k$ selection, which introduces a number of additional difficulties in the analysis.

\paragraph{Induction hypothesis ($l$):} For any epoch $i<l$, and all non-overlapping strategies $\alpha^1,\ldots,\alpha^k \in \cA^\cX$,
$$\E_x \left\{ \sum_{j=1}^k [\pred_i(x,\widehat{\alpha}_i^j) -\pred_i(x,\alpha^j) ] - 2 \sum_{j=1}^k [f^*(x,\alpha^{*,j})-f^*(x,\alpha^j)] \right\} \leq \frac{k(A -k+1)}{\gamma_i}$$
and
$$\E_x  \left\{ \sum_{j=1}^k [f^*(x,\alpha^{*,j})-f^*(x,\alpha^j)] - 2\sum_{j=1}^k[\pred_i(x,\widehat{\alpha}_i^j) -\pred_i(x,\alpha^j) ] \right\} \leq \frac{k(A -k+1)}{\gamma_i}.$$

\begin{lemma}
    \label{lem:main}
    Suppose event \eqref{eq:big_event} holds. For all non-overlapping strategies $\alpha^1,\ldots,\alpha^k$,
    \begin{align*}
    \E_{x} \frac{1}{k}\sum_{j=1}^k  |\pred_l(x,\alpha^j)-f^*(x,\alpha^j)|  &\leq  \phi_l  \cdot \left((A -k+1) +\sum_{i=1}^{l-1} \frac{n_i}{N_{l-1}}\gamma_i  \E_{x} \frac{1}{k}\sum_{j=1}^k \left[ \pred_i(x,\widehat{\alpha}^j_i) - \pred_i(x,\alpha^j)\right]  \right)^{1/2}
    \end{align*}
Hence, by the induction hypothesis $(l)$,
\begin{align*}
    \E_{x} \frac{1}{k}\sum_{j=1}^k  |\pred_l(x,\alpha^j)-f^*(x,\alpha^j)|  &\leq \sqrt{2}\phi_l  \cdot \left((A -k+1)+ \gamma_l  \E_{x} \frac{1}{k}\sum_{j=1}^k \left[ f^*(x,\alpha^{*,j}) - f^*(x,\alpha^j)\right]  \right)^{1/2} 
\end{align*}
assuming $\gamma_i$ are non-decreasing.
\end{lemma}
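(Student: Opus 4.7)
The plan is to follow a two-step Cauchy--Schwarz decoupling, extending the argument of \citep{simchi2020bypassing} from single-arm to top-$k$ selection. Writing $p_i(a\vert x) := \mathbb{P}(a\in\chosenset_s \vert x_s=x,\,e(s)=i)$ for the marginal inclusion probability under Algorithm~\ref{alg:topk} during epoch $i$, I would define the mixture distribution $q(a\vert x) := \tfrac{1}{N_{l-1}}\sum_{i=1}^{l-1} n_i\, p_i(a\vert x)$. Applying Cauchy--Schwarz pointwise in $x$ and then once more under $\E_x$ yields
\begin{align*}
  \E_x \frac{1}{k}\sum_{j=1}^k \lvert \pred_l(x,\alpha^j)-f^*(x,\alpha^j)\rvert \;\leq\; \sqrt{\E_x \frac{1}{k}\sum_{j=1}^k \frac{1}{q(\alpha^j\vert x)}}\cdot\sqrt{\E_x \frac{1}{k}\sum_{j=1}^k q(\alpha^j\vert x)\bigl(\pred_l(x,\alpha^j)-f^*(x,\alpha^j)\bigr)^2}.
\end{align*}
The second factor is bounded using event $\cE$: since $\alpha^1(x),\ldots,\alpha^k(x)$ are distinct, $\sum_j q(\alpha^j\vert x)(\pred_l-f^*)^2 \leq \sum_{a\in\cA} q(a\vert x)(\pred_l-f^*)^2$, and unfolding $q$ as a convex combination over past epochs identifies the resulting quantity with the left-hand side of $\cE$, so it is at most $\phi_l^2/2$.

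The first factor is where the IGW structure enters. Jensen's inequality gives $1/q(\alpha^j\vert x) \leq \tfrac{1}{N_{l-1}}\sum_i n_i/p_i(\alpha^j\vert x)$, so it suffices to bound $\sum_{j=1}^k 1/p_i(\alpha^j\vert x)$ for each past epoch $i$. Inspecting Algorithm~\ref{alg:topk} with $r=1$: the top $k-1$ estimated actions are chosen deterministically (so $p_i=1$ on them), the $k$-th highest $\widehat{\alpha}_i^k$ satisfies $p_i(\widehat{\alpha}_i^k\vert x)\geq 1/(A-k+1)$, and for the remaining actions the IGW formula gives $p_i(a\vert x) = 1/\bigl((A-k+1)+\gamma_i(\pred_i(x,\widehat{\alpha}_i^k)-\pred_i(x,a))\bigr)$. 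All three cases are uniformly captured by
\begin{align*}
  \frac{1}{p_i(a\vert x)} \;\leq\; (A-k+1) + \gamma_i \max\bigl(0,\,\pred_i(x,\widehat{\alpha}_i^k)-\pred_i(x,a)\bigr), \qquad a\in\cA.
\end{align*}
Summing over $j$ with the $\alpha^j$ distinct, the positive parts contribute only for indices $j\in S^c := \{j:\alpha^j\notin\{\widehat{\alpha}_i^1,\ldots,\widehat{\alpha}_i^{k-1}\}\}$; a pigeonhole step, using that the top-$k$ indices not ``used'' by the complementary set $S$ (where $\alpha^j$ coincides with a top-$(k-1)$ action) all have $\pred_i$-value $\geq \pred_i(x,\widehat{\alpha}_i^k)$, yields
\begin{align*}
  \sum_{j=1}^k \frac{1}{p_i(\alpha^j\vert x)} \;\leq\; k(A-k+1) + \gamma_i \sum_{j=1}^k\bigl[\pred_i(x,\widehat{\alpha}_i^j)-\pred_i(x,\alpha^j)\bigr].
\end{align*}
Dividing by $k$, averaging over $i$ with weights $n_i/N_{l-1}$, and combining with the second-factor bound proves the first claim.

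For the second claim I would apply the induction hypothesis $(l)$, which replaces each $\E_x\tfrac{1}{k}\sum_j[\pred_i(x,\widehat{\alpha}_i^j)-\pred_i(x,\alpha^j)]$ by $2\E_x\tfrac{1}{k}\sum_j[f^*(x,\alpha^{*,j})-f^*(x,\alpha^j)] + (A-k+1)/\gamma_i$. Using that $\gamma_i$ is non-decreasing so $\sum_{i=1}^{l-1}\tfrac{n_i}{N_{l-1}}\gamma_i\leq\gamma_l$, the regret-like term acquires the $\gamma_l$ prefactor, while the residual $(A-k+1)$ adds to the one already present inside the square root; factoring $2$ out then produces the claimed $\sqrt{2}\phi_l$ prefactor. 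The main obstacle I expect is the combinatorial pigeonhole step that converts the positive-part sum over $S^c$ into $\sum_j[\pred_i(x,\widehat{\alpha}_i^j)-\pred_i(x,\alpha^j)]$: this is the genuinely new ingredient beyond the $k=1$ case of \citep{simchi2020bypassing} and requires carefully tracking which $\alpha^j$ land in the top-$(k-1)$ estimated set and which top-$k$ indices are left over as surrogates for the complementary ones.
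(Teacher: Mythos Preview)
Your proposal is correct and tracks the paper's argument closely; the pigeonhole step you highlight is exactly the content of the paper's Lemma~\ref{lem:induct}. One minor stylistic difference: the paper first splits the sum according to whether $\alpha^j(x)$ lands in the deterministic top-$(k-1)$ slot, applies Cauchy--Schwarz separately to each piece, and recombines via $(\sqrt{a}+\sqrt{b})^2\le 2(a+b)$, whereas your single Cauchy--Schwarz with the mixture $q$ and the uniform bound $1/p_i(a\mid x)\le (A-k+1)+\gamma_i[\,\cdot\,]_+$ avoids this split (and in fact saves a harmless factor of $\sqrt{2}$).
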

\begin{proof}
Given $x$, let $T_x(\pred_{i})\subset [A]$ denote the indices of top $k-1$ actions according to $\pred_i(x,\cdot)$. Let $p_i(\cdot| x)$ denote the \igw distribution on epoch $i$, with support on the remaining $A-k+1$ actions. On round $s$ in epoch $e(s)$, given $x_s$, Algorithm~\ref{alg:topk} with $r=1$ chooses $\chosenset_s$ by selecting $T_{x_s}(\pred_{e(s)})$ determistically and selecting the last action according to $p_{e(s)}(\cdot| x_s)$. We write $p_{e(s)}(\alpha|x_s)$ as a shorthand for $p_{e(s)}(\alpha(x_s)|x_s)$.

For non-overlapping strategies $\alpha^1,\ldots,\alpha^k$,
\begin{align*}
    &\E_{x} \frac{1}{k}\sum_{j=1}^k |\pred_l(x,\alpha^j)-f^*(x,\alpha^j)| = \frac{1}{N_{l-1}}\sum_{s=1}^{N_{l-1}} \E_{x_s} \left\{ \frac{1}{k}\sum_{j=1}^k\left|  \pred_{l}(x_s,\alpha^j)-f^*(x_s,\alpha^j)\right| \right\}.
\end{align*}
This sum can be written as
\begin{align*}
    &\frac{1}{N_{l-1}}\sum_{s=1}^{N_{l-1}} \E_{x_s} \left[ \frac{1}{k}\sum_{j=1}^k\left|  \pred_{l}(x_s,\alpha^j)-f^*(x_s,\alpha^j)\right|\cdot \ind{\alpha^j(x_s)\in T_{x_s}(\pred_{e(s)})} \right.\\
    &\left.\hspace{0in}+   \frac{1}{k}\sum_{j=1}^k |\pred_{l}(x_s,\alpha^j)-f^*(x_s,\alpha^j)| \sqrt{p_{e(s)}(\alpha^j|x_s)} \frac{1}{\sqrt{p_{e(s)}(\alpha^j|x_s)}} \cdot \ind{\alpha^j(x_s)\notin T_{x_s}(\pred_{e(s)})} \right].
\end{align*}
By the Cauchy-Schwartz inequality, the last expression is upper-bounded by
\begin{align*}
    &\left(\frac{1}{N_{l-1}}\sum_{s=1}^{N_{l-1}} \E_{x_s} \frac{1}{k}\sum_{j=1}^k |f^*(x_s,\alpha^j)-\pred_{l}(x_s,\alpha^j)|^2 \ind{\alpha^j(x_s)\in T_{x_s}(\pred_{e(s)})} \right)^{1/2}\\
    &+\left(\frac{1}{N_{l-1}}\sum_{s=1}^{N_{l-1}} \E_{x_s} \frac{1}{k}\sum_{j=1}^k |f^*(x_s,\alpha^j)-\pred_{l}(x_s,\alpha^j)|^2 p_{e(s)}(\alpha^j|x_s) \ind{\alpha^j(x_s)\notin T_{x_s}(\pred_{e(s)})} \right)^{1/2} \\
    &\hspace{2in}\times \left(\frac{1}{N_{l-1}}\sum_{s=1}^{N_{l-1}} \E_{x_s} \frac{1}{k}\sum_{j=1}^k \frac{1}{p_{e(s)}(\alpha^j|x_s)} \ind{\alpha^j(x_s)\notin T_{x_s}(\pred_{e(s)})} \right)^{1/2} \\
    &\leq \left(\frac{1}{N_{l-1}}\sum_{s=1}^{N_{l-1}} \E_{x_s}\frac{1}{k}\sum_{a\in T_{x_s}(\pred_{e(s)})} |f^*(x_s,a)-\pred_{l}(x_s,a)|^2  \right)^{1/2}\\
    &+\left(\frac{1}{N_{l-1}}\sum_{s=1}^{N_{l-1}} \frac{1}{k} \E_{x_s,a\sim p_{e(s)}(\cdot|x_s)} |f^*(x_s,a)-\pred_{l}(x_s,a)|^2   \right)^{1/2} \\
    &\hspace{2in}\times\left(\sum_{i=1}^{l-1} \frac{n_i}{N_{l-1}}\E_{x}\frac{1}{k}\sum_{j=1}^k  \frac{1}{p_{i}(\alpha^j|x)} \ind{\alpha^j(x)\notin T_{x}(\pred_{i})} \right)^{1/2}. 
\end{align*}
We further upper bound the above by
\begin{align}
    &\left\{ \left(\frac{1}{N_{l-1}}\sum_{s=1}^{N_{l-1}} \E_{x_s}\frac{1}{k}\sum_{a\in T_{x_s}(\pred_{e(s)})} |f^*(x_s,a)-\pred_{l}(x_s,a)|^2  \right)^{1/2} \right. \notag\\
    &\left.\hspace{1in}+\left(\frac{1}{N_{l-1}}\sum_{s=1}^{N_{l-1}} \frac{1}{k} \E_{x_s,a\sim p_{e(s)}(\cdot|x_s)} |f^*(x_s,a)-\pred_{k}(x_s,a)|^2   \right)^{1/2} \right\} \notag\\
    &\hspace{2.2in}\times\left(1 \vee \sum_{i=1}^{l-1} \frac{n_i}{N_{l-1}}\E_{x} \frac{1}{k} \sum_{j=1}^k \frac{1}{p_{i}(\alpha^j|x)} \ind{\alpha^j\notin T_x(\pred_{i})} \right)^{1/2} \notag\\
    &\leq \left(\frac{2}{N_{l-1}}\sum_{s=1}^{N_{l-1}} \frac{1}{k}\E_{x_s} \left[ 
        \sum_{a\in T_{x_s}(\pred_{e(s)})} |f^*(x_s,a)-\pred_{l}(x_s,a)|^2  + \E_{a\sim p_{e(s)}(\cdot|x_s)} |f^*(x_s,a)-\pred_{l}(x_s,a)|^2
        \right]
        \right)^{1/2} \label{eq:stat_rate_in_proof}\\
    &\hspace{2.2in}\times\left(1\vee \sum_{i=1}^{l-1} \frac{n_i}{N_{l-1}}\E_{x} \frac{1}{k}\sum_{j=1}^k \frac{1}{p_{i}(\alpha^j|x)} \ind{a^j(x)\notin T_x(\pred_{i})} \right)^{1/2} \label{eq:inverse_term_in_proof}
\end{align}
where we use $(\sqrt{a}+\sqrt{b})^2\leq 2(a+b)$ for nonnegative $a,b$. Now, observe that 
\begin{align}
    &\E_{x_s} \left[ 
         \sum_{a\in T_{x_s}(\pred_{e(s)})} |f^*(x_s,a)-\pred_{l}(x_s,a)|^2  + \E_{a\sim p_{e(s)}(\cdot|x_s)} |f^*(x_s,a)-\pred_{l}(x_s,a)|^2
         \right] \\
    &=\E_{x_s,\chosenset_s}\left\{ \sum_{a\in\chosenset_s}  (\pred_l(x_s,a) - f^*(x_s,a))^2  \vert \cH_{s-1} \right\}
\end{align}
by the definition of the selected set $\chosenset_s$ in Algorithm~\ref{alg:topk} with $r=1$. Under the event \eqref{eq:big_event}, the expression in \eqref{eq:stat_rate_in_proof} is at most $\phi_l$.
We now turn to the expression in \eqref{eq:inverse_term_in_proof}. Note that by definition, for any strategy $\alpha^j$
\begin{align*}
    \frac{1}{p_i (\alpha^j|x)} \ind{a^j(x)\notin T_{x}(\pred_i)} &= \left[ (A -k+1) + \gamma_i (\pred_i(x,\widehat{\alpha}^k_i) - \pred_i(x,\alpha^j))\right]\ind{\alpha^j(x)\notin T_{x}(\pred_i)} \\
    &\leq (A -k+1) + \gamma_i\left[ \pred_i(x,\widehat{\alpha}^k_i) - \pred_i(x,\alpha^j)\right]_+ \, ,
\end{align*}
where $[a]_+ = \max\{a,0\}$. Therefore, by Lemma~\ref{lem:induct}, for any non-overlapping strategies $\alpha^1,\ldots,\alpha^{k}$,
\begin{align*}
    \frac{1}{k} \sum_{j=1}^k \frac{1}{p_i (\alpha^j|x)} \ind{\alpha^j(x)\notin T_x(\pred_i)} &\leq  (A -k+1) + \frac{1}{k} \sum_{j=1}^k \gamma_i\left[ \pred_i(x,\widehat{\alpha}^k_i) - \pred_i(x,\alpha^j)\right]_+ \\
    &\leq (A -k+1) + \frac{1}{k} \sum_{j=1}^k \gamma_i\left[ \pred_i(x,\widehat{\alpha}^j_i) - \pred_i(x,\alpha^j)\right].
\end{align*}
Since the above expression is at least $(A -k+1)\geq 1$, we may drop the maximum with 1 in \eqref{eq:inverse_term_in_proof}. Putting everything together,
\begin{align*}
    \E_{x} \frac{1}{k}\sum_{j=1}^k  |\pred_l(x,\alpha^j)-f^*(x,\alpha^j)|  &\leq \phi_l  \cdot \left((A -k+1) +\sum_{i=1}^{l-1} \frac{n_i}{N_{l-1}}\gamma_i  \E_{x} \frac{1}{k}\sum_{j=1}^k \left[ \pred_i(x,\widehat{\alpha}^j_i) - \pred_i(x,\alpha^j)\right]  \right)^{1/2}
\end{align*}
To prove the second statement, by induction we upper bound the above expression by
\begin{align*}
    &\phi_l  \cdot \left((A -k+1) +\max_{i<l} \gamma_i  \left\{ 2\E_{x} \frac{1}{k}\sum_{j=1}^k \left[ f^*(x,\alpha^{*,j}) - f^*(x,\alpha^j)\right] + \frac{A}{\gamma_i} \right\} \right)^{1/2} \\
    &\leq \phi_l  \cdot \left(2(A -k+1) + 2\gamma_l  \E_{x} \frac{1}{k}\sum_{j=1}^k \left[ f^*(x,\alpha^{*,j}) - f^*(x,\alpha^j)\right]  \right)^{1/2}. 
\end{align*}

\end{proof}

We now prove that inductive hypothesis holds for each epoch $l$.
\begin{lemma}
    Suppose we set $\gamma_l = \sqrt{A-k+1}/(32\phi_l)$ for each $l$, and that event $\cE$ in \eqref{eq:big_event} holds. Then the induction hypothesis holds for each $l\geq 2$.
\end{lemma}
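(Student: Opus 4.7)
The plan is to proceed by induction on $l$, with the base case $l=1$ holding vacuously since there are no epochs $i<1$. Assuming the induction hypothesis holds through epoch $l-1$, I will establish it at epoch $l$. Abbreviate $V_\alpha := \E_x\frac{1}{k}\sum_j |\pred_l(x,\alpha^j)-f^*(x,\alpha^j)|$ and $\Delta_\alpha := \E_x\frac{1}{k}\sum_j[f^*(x,\alpha^{*,j})-f^*(x,\alpha^j)]$ for any non-overlapping tuple $\alpha=(\alpha^1,\ldots,\alpha^k)$. The second statement of Lemma~\ref{lem:main} (which already invokes the induction hypothesis for $i<l$) yields the master inequality $V_\alpha \le \sqrt{2}\phi_l\sqrt{(A-k+1)+\gamma_l\Delta_\alpha}$.

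First I would instantiate this bound at two specific tuples. Applied to the optimal tuple $\alpha^*$, it gives $V_{\alpha^*}\le\sqrt{2}\phi_l\sqrt{A-k+1}$ since $\Delta_{\alpha^*}=0$. Applied to the greedy tuple $\widehat{\alpha}_l$, I use the greediness of $\widehat{\alpha}_l$ with respect to $\pred_l$ to telescope
\[
\sum_j [f^*(x,\alpha^{*,j}) - f^*(x,\widehat{\alpha}_l^j)] \le \sum_j[f^*-\pred_l](x,\alpha^{*,j}) + \sum_j [\pred_l - f^*](x,\widehat{\alpha}_l^j),
\]
because the cross term $\sum_j[\pred_l(x,\alpha^{*,j})-\pred_l(x,\widehat{\alpha}_l^j)]$ is non-positive. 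Taking $\E_x$ gives $\Delta_{\widehat{\alpha}_l}\le V_{\alpha^*}+V_{\widehat{\alpha}_l}$. Plugging this into the master inequality for $V_{\widehat{\alpha}_l}$, splitting via $\sqrt{a+b}\le\sqrt{a}+\sqrt{b}$, and applying AM-GM $\sqrt{\gamma_l V_{\widehat{\alpha}_l}\cdot c}\le \tfrac14 \gamma_l c/\epsilon + \epsilon V_{\widehat{\alpha}_l}$ with a suitable $\epsilon$ produces an inequality of the form $(1-\tfrac{1}{8})V_{\widehat{\alpha}_l} \le C_1(A-k+1)/\gamma_l$, which, under the specific choice $\gamma_l=\sqrt{A-k+1}/(32\phi_l)$, resolves to $V_{\widehat{\alpha}_l}\le C_1'(A-k+1)/\gamma_l$ with $C_1'$ small. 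An identical split applied directly to the master inequality for $V_\alpha$ yields $V_\alpha\le C_2(A-k+1)/\gamma_l + \Delta_\alpha/8$.

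To close the first statement of the induction, I expand
\[
\pred_l(x,\widehat{\alpha}_l^j)-\pred_l(x,\alpha^j) = [\pred_l-f^*](x,\widehat{\alpha}_l^j) + [f^*(x,\widehat{\alpha}_l^j)-f^*(x,\alpha^j)] + [f^*-\pred_l](x,\alpha^j),
\]
sum over $j$, take $\E_x$, and use $\sum_j f^*(x,\widehat{\alpha}_l^j)\le \sum_j f^*(x,\alpha^{*,j})$ together with the $V$-bounds from the previous paragraph to obtain
\[
\E_x\tfrac{1}{k}\sum_j[\pred_l(x,\widehat{\alpha}_l^j)-\pred_l(x,\alpha^j)] \le V_{\widehat{\alpha}_l}+\Delta_\alpha+V_\alpha \le 2\Delta_\alpha + (A-k+1)/\gamma_l,
\]
which is exactly the first statement of the induction hypothesis at epoch $l$ (after multiplying by $k$). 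For the reverse direction, I run the same decomposition in the opposite order to get $\Delta_\alpha \le \Delta_{\widehat{\alpha}_l}+V_{\widehat{\alpha}_l}+\E_x\tfrac{1}{k}\sum_j[\pred_l(x,\widehat{\alpha}_l^j)-\pred_l(x,\alpha^j)]+V_\alpha$, then apply $\Delta_{\widehat{\alpha}_l}\le V_{\alpha^*}+V_{\widehat{\alpha}_l}$ and the two $V$-bounds; since $V_\alpha$ depends on the unknown $\Delta_\alpha$, the same implicit-inequality trick (moving $\Delta_\alpha/8$ to the left) solves the resulting inequality.

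The main obstacle is precisely the constant-juggling in the intermediate step. The inner bound on $V_{\widehat{\alpha}_l}$ is implicit because $V_{\widehat{\alpha}_l}$ appears on both sides after substituting $\Delta_{\widehat{\alpha}_l}\le V_{\alpha^*}+V_{\widehat{\alpha}_l}$, and the AM-GM split must be calibrated so that its coefficient on $V_{\widehat{\alpha}_l}$ is strictly less than $1$ and so that the coefficient on $\Delta_\alpha$ arising from $V_\alpha$ stays at $1/8$; this keeps the final factor on $\Delta_\alpha$ in the induction bound at $2$ rather than something larger, and keeps the coefficient on $(A-k+1)/\gamma_l$ at $1$. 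The prefactor $32$ in $\gamma_l=\sqrt{A-k+1}/(32\phi_l)$ is engineered precisely so that both constraints hold comfortably.
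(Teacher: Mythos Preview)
Your approach is correct and uses the same core ingredients as the paper: the master inequality from Lemma~\ref{lem:main}, the telescoping decomposition comparing $\pred_l$ and $f^*$, and AM--GM to close the self-bounding inequalities. The constants do work out with the factor $32$, as you anticipate. Two remarks are worth making.

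\textbf{Base case.} Your induction cannot literally start at $l=1$: the master inequality from Lemma~\ref{lem:main} relies on the event $\cE$, which only constrains $\pred_l$ for $l\geq 2$ (and indeed $\pred_1$ is fit on no data since $N_0=0$). The paper takes the base case to be hypothesis $(2)$ and verifies the two inequalities for $i=1$ directly by boundedness of $[0,1]$-valued functions together with $\gamma_1=O(1)$. This is a small fix but necessary.

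\textbf{Organization.} The paper orders the two directions differently and thereby avoids your separate implicit inequality for $V_{\widehat{\alpha}_l}$. It first proves $\RegExp(\balpha)\le 2\RegEmp_l(\balpha)+A'/(2\gamma_l)$ via self-bounding in $\RegExp$ (from $\RegExp(\balpha)\le \RegEmp_l(\balpha)+V_{\alpha^*}+V_\alpha$ and the master bound on $V_\alpha$). It then feeds this already-proved inequality back into the master bound to obtain $V_\alpha \le A'/(4\gamma_l)+\tfrac12\RegEmp_l(\balpha)$, and \emph{specializes to} $\balpha=\widehat{\balpha}_l$: since $\RegEmp_l(\widehat{\balpha}_l)=0$, this immediately gives $V_{\widehat{\alpha}_l}\le A'/(4\gamma_l)$ with no implicit inequality to solve. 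Your route through $\Delta_{\widehat{\alpha}_l}\le V_{\alpha^*}+V_{\widehat{\alpha}_l}$ is valid but costs an extra self-bounding step; the paper's ordering buys that step for free, which is also why the paper's decomposition for the direction $\RegExp\le 2\RegEmp_l$ goes through $V_{\alpha^*}$ rather than through $\Delta_{\widehat{\alpha}_l}+V_{\widehat{\alpha}_l}$ as you do.
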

\begin{proof}

The base of the induction ($l=2$) is satisfied trivially if $\gamma_2=O(1)$ since functions are bounded. Now suppose the induction hypothesis $(l)$ holds for some $l\geq 2$. We shall prove it for $(l+1)$.

Denote by $\balpha=(\alpha^1,\ldots,\alpha^{k})$ any set of non-overlapping strategies. We also use the shorthand $A' = A -k+1$ for the size of the support of the IGW distribution. Define
$$\RegExp(\balpha) = \E_{x} \frac{1}{k}\sum_{j=1}^k \left[ f^*(x, \alpha^{*,j}) - f^*(x,\alpha^j) \right],~~~~ \RegEmp_l(\balpha) = \E_{x} \frac{1}{k}\sum_{j=1}^k \left[ \pred_l(x, \widehat{\alpha}^j_l) - \pred_k (x,\alpha^j) \right].$$
Since
$$\left[ f^*(x, \alpha^{*,j}) - f^*(x,a) \right] = \left[ \pred_l(x, \alpha^{*,j}) - \pred_l (x,a) \right] + \left[ f^*(x, \alpha^{*,j}) - \pred_l (x,\alpha^{*,j}) \right] + \left[ \pred_l(x, a) - f^* (x,a) \right],$$
it holds that 
\begin{align*}
    &\sum_{j=1}^k \left[ f^*(x, \alpha^{*,j}) - f^*(x,\alpha^j) \right] \\
    &= \sum_{j=1}^k \left[ \pred_l(x, \alpha^{*,j}) - \pred_l (x,\alpha^j) \right] + \sum_{j=1}^k \left[ f^*(x, \alpha^{*,j}) - \pred_l (x,\alpha^{*,j}) \right] + \sum_{j=1}^k \left[ \pred_l(x, \alpha^j) - f^* (x,\alpha^j) \right] \\ 
    &\leq \sum_{j=1}^k \left[ \pred_l(x, \widehat{\alpha}^j_l) - \pred_l (x,\alpha^j) \right] + \sum_{j=1}^k \left[ f^*(x, \alpha^{*,j}) - \pred_l (x,\alpha^{*,j}) \right] + \sum_{j=1}^k \left[ \pred_l(x, \alpha^j) - f^* (x,\alpha^j) \right].
\end{align*}
Therefore, for any $\balpha$,
\begin{align}
    \label{eq:RlessRemp}
    \RegExp(\balpha)
    &\leq \E_{x}  \frac{1}{k}\sum_{j=1}^k \left[ \pred_l(x, \alpha^{*,j}) - \pred_l (x,\alpha^j) \right] \notag \\
    &\hspace{1in} +  \E_x \frac{1}{k}\sum_{j=1}^k  \left[ f^*(x, \alpha^{*,j}) - \pred_l (x,\alpha^{*,j}) \right] + \E_{x} \frac{1}{k}\sum_{j=1}^k  \left[ \pred_l(x, \alpha^j) - f^* (x,\alpha^j) \right].
\end{align}
For the middle term in \eqref{eq:RlessRemp}, we apply the last statement of Lemma~\ref{lem:main} to $\alpha^{*,1},\ldots,\alpha^{*,k}$. We have:
\begin{align*}
    \E_x \frac{1}{k}\sum_{j=1}^k  \left[f^*(x, \alpha^{*,j}) - \pred_l (x,\alpha^{*,j}) \right] \leq \sqrt{2A'}\phi_l\cdot
\end{align*}
For the last term in \eqref{eq:RlessRemp},
\begin{align*}
     \frac{1}{k}\sum_{j=1}^k \E_{x}  \left[ \pred_l(x, \alpha^j) - f^* (x,\alpha^j) \right] \leq \sqrt{2} \phi_l \cdot \left(A' + \gamma_l \RegExp(\balpha) \right)^{1/2}.
\end{align*}
Hence, we have the inequality
\begin{align*}
    \RegExp(\balpha) &\leq \RegEmp_l(\balpha) + \sqrt{2A'}\phi_l + \sqrt{2}\phi_l \cdot \left(A' +\gamma_l \RegExp(\balpha) \right)^{1/2} \\
    &\leq \RegEmp_l(\balpha) + 2\phi_l \sqrt{2A'} +  \phi_l\sqrt{2\gamma_l \RegExp(\balpha) }\\
    &\leq \RegEmp_l(\balpha) + 2\phi_l \sqrt{2A'} + \gamma_l \phi_l^2 + \frac{1}{2}\RegExp(\balpha) 
\end{align*}
and thus
\begin{align*}
    \RegExp(\balpha) &\leq 2\RegEmp_l(\balpha) + 4\phi_l\sqrt{2A'} +  2\gamma_l  \phi_l^2 \leq 2\RegEmp_l(q) + A'/(2\gamma_l)
\end{align*}
On the other hand,
\begin{align*}
    \left[ \pred_l(x, \widehat{\alpha}^j_l) - \pred_l (x,\alpha^j) \right] = \left[ f^*(x, \widehat{\alpha}^j_l) - f^*(x,\alpha^j) \right] + \left[ \pred_l(x, \widehat{\alpha}^j_l) - f^*(x, \widehat{\alpha}^j_l) \right] + \left[  f^*(x,\alpha^j) - \pred_l (x,\alpha^j)  \right] 
\end{align*}
and so
\begin{align*}
    &\sum_{j=1}^k\left[ \pred_l(x, \widehat{\alpha}^j_l) - \pred_l (x,\alpha^j) \right] \\
    &= \sum_{j=1}^k\left[ f^*(x, \widehat{\alpha}^j_l) - f^*(x,\alpha^j) \right] + \sum_{j=1}^k\left[ \pred_l(x, \widehat{\alpha}^j_l) - f^*(x, \widehat{\alpha}^j_l) \right] + \sum_{j=1}^k \left[  f^*(x,\alpha^j) - \pred_l (x,\alpha^j)  \right] \\
    &\leq \sum_{j=1}^k\left[ f^*(x, \alpha^{*,j}) - f^*(x,\alpha^j) \right] + \sum_{j=1}^k\left[ \pred_l(x, \widehat{\alpha}^j_l) - f^*(x, \widehat{\alpha}^j_l) \right] + \sum_{j=1}^k\left[  f^*(x,\alpha^j) - \pred_l (x,\alpha^j)  \right].
\end{align*}
Therefore, for any $\balpha$
\begin{align}
    \label{eq:RemplessR}
    \RegEmp_l(\balpha) &\leq \RegExp(\balpha) + \E_{x} \frac{1}{k}\sum_{j=1}^k [\pred_l(x,\widehat{\alpha}_l^j) - f^*(x,\widehat{\alpha}_l^j)] +  \E_{x} \frac{1}{k}\sum_{j=1}^k [ f^*(x,\alpha^j)-\pred_l(x,\alpha^j)].
\end{align}
The last term in \eqref{eq:RemplessR} is bounded by Lemma~\ref{lem:main} by
\begin{align*}
    \E_{x} \frac{1}{k}\sum_{j=1}^k | f^*(x,\alpha^j)-\pred_l(x,\alpha^j)| &\leq \sqrt{2} \phi_l \cdot \left(A' + \gamma_l \RegExp(\balpha) \right)^{1/2} \\
    &\leq  \sqrt{2} \phi_l \cdot \left(A' + 2\gamma_l \RegEmp_l(\balpha) + A'/2 \right)^{1/2} \\
    &\leq 2\phi_l \sqrt{A'} + 2\phi_l^2 \gamma_l + \frac{1}{2}\RegEmp_l(\balpha) \\
    &\leq \frac{A'}{4\gamma_l } + \frac{1}{2}\RegEmp_l(\balpha).
\end{align*}
Now, for the middle term in \eqref{eq:RemplessR}, we use the above inequality with $\widehat{\balpha}_l = (\widehat{\alpha}^{1}_l, \ldots, \widehat{\alpha}^{k}_l)$:
\begin{align*}
    \E_{x} \frac{1}{k}\sum_{j=1}^k [\pred_l(x,\widehat{\alpha}_l^j) - f^*(x,\widehat{\alpha}_l^j)] \leq \frac{A'}{4\gamma_l} + \frac{1}{2} \RegEmp_l(\widehat{\balpha}_l) = \frac{A'}{4\gamma_l}.
\end{align*}
Putting the terms together,
\begin{align*}
    \RegEmp_l(\balpha) &\leq 2\RegExp(\balpha) + \frac{A'}{\gamma_l }.
\end{align*}    
Since $\balpha$ is arbitrary, the induction step follows.
\end{proof}

\begin{lemma}
\label{lem:induct}
    For $v\in\reals^A$, let $\widehat{a}^1,\ldots,\widehat{a}^k$ be indices of largest $k$ coordinates of $v$ in decreasing order. Let $a^1,\ldots,a^k$ be any other set of distinct coordinates. Then
    $$\sum_{j=1}^k [v(\widehat{a}^k)-v(a^j)]_+ \leq \sum_{j=1}^k v(\widehat{a}^j)-v(a^j) $$
\end{lemma}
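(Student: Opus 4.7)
The plan is to exploit the simple fact that the quantity $v(\widehat{a}^k)$ on the left-hand side is just the threshold value: the $k$-th largest entry of $v$, which I will denote $m$. I will then split the sum on the left according to whether the indices $a^j$ lie inside the top-$k$ set $S=\{\widehat{a}^1,\ldots,\widehat{a}^k\}$ or outside it, and match things up with $S$ on the right-hand side via the symmetric-difference decomposition $S \triangle T$, where $T=\{a^1,\ldots,a^k\}$.

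The first observation is: for any $a^j \in S$ we have $v(a^j) \ge m$, so $[m - v(a^j)]_+ = 0$; and for any $a^j \notin S$ we have $v(a^j) \le m$ (since $m$ is the smallest value attained on the top-$k$ set), so $[m - v(a^j)]_+ = m - v(a^j)$. Writing $p = |S \cap T|$, this gives
\[
\sum_{j=1}^k [m - v(a^j)]_+ \;=\; (k-p)\, m \;-\; \sum_{b \in T \setminus S} v(b).
\]
On the other side, since permuting within $S$ does not change the sum, the right-hand side equals
\[
\sum_{b \in S} v(b) - \sum_{b \in T} v(b) \;=\; \sum_{b \in S \setminus T} v(b) \;-\; \sum_{b \in T \setminus S} v(b),
\]
so the inequality reduces to showing $(k-p)\, m \le \sum_{b \in S \setminus T} v(b)$.

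This last step is essentially immediate: $|S \setminus T| = |T \setminus S| = k-p$, and every $b \in S \setminus T \subseteq S$ satisfies $v(b) \ge m$ by the definition of $m$ as the minimum value over the top-$k$ set. Summing $k-p$ such inequalities closes the argument. There is really no hard step here; the main conceptual point is simply to recognize the symmetric-difference cancellation that makes the two telescoping sums line up, and to note the two-sided bound $v(a^j) \ge m$ for $a^j \in S$ versus $v(a^j) \le m$ for $a^j \notin S$, which is exactly what makes the positive-part operation on the left match the greedy choice on the right.
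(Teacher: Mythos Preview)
Your proof is correct and takes a genuinely different route from the paper's. The paper argues by induction on $k$: at the step from $k-1$ to $k$ it peels off the index $a^m=\argmin_j v(a^j)$, uses $v(\widehat{a}^k)\ge v(a^m)$ to write $[v(\widehat{a}^k)-v(a^m)]_+=v(\widehat{a}^k)-v(a^m)$, applies the hypothesis to the remaining $k-1$ indices, and adds the two lines. You instead give a direct, non-inductive argument: with $S=\{\widehat{a}^1,\ldots,\widehat{a}^k\}$, $T=\{a^1,\ldots,a^k\}$, and $m=v(\widehat{a}^k)$, the positive-part terms vanish on $T\cap S$ and lose the $[\cdot]_+$ on $T\setminus S$, while the right-hand side collapses to $\sum_{S\setminus T}v-\sum_{T\setminus S}v$ by cancellation on $S\cap T$; the inequality then reduces to $(k-p)m\le \sum_{b\in S\setminus T}v(b)$, which holds term-by-term since every $b\in S$ has $v(b)\ge m$. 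Your approach is a bit more structural---it isolates the symmetric difference as the object that matters and handles ties without special care---while the paper's induction is equally valid but more mechanical.
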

\begin{proof}
    We prove this by induction on $r$. For $r=1$, 
    $$[v(\widehat{a}^1)-v(a^1)]_+ = v(\widehat{a}^1)-v(a^1) $$
    Induction step: Suppose 
    $$\sum_{j=1}^{k-1} [v(\widehat{a}^k)-v(b^j)]_+ \leq \sum_{j=1}^{k-1} v(\widehat{a}^j)-v(b^j) $$
    for any $b^{1},\ldots,b^{k-1}$. Let $a^{m}=\argmin_{j=1,\ldots,k} v(a^j)$. Since all the values are distinct, it must be that $v(\widehat{a}^k)\geq v(a^m)$. Applying the induction hypothesis to $\{a^1,\ldots,a^k\}\setminus \{a^m\}$ and adding 
    $$[v(\widehat{a}^k) - v(a^m)]_+ = v(\widehat{a}^k) - v(a^m)$$
    to both sides concludes the induction step.
\end{proof}

\begin{proof}[Proof of Theorem~\ref{thm:topkregret}]
Recall that on epoch $l$, the strategy is $\alpha^1_l=\widehat{\alpha}^1_l,\ldots,\alpha^{k-1}_l=\widehat{\alpha}^{k-1}_l$ for the first $k-1$ arms, and then sampling $\alpha^k_l(x)$ from \igw distribution $p_l$. Observe that for any $x$ and any draw $\alpha^k_l(x)$, the set of $k$ arms is distinct (i.e. the strategies are non-overlapping), and thus under the event $\cE$ in \eqref{eq:big_event}, Lemma~\ref{lem:main} and inductive statements hold. Hence, expected regret per step in epoch $l$ is bounded as
\begin{align}
     & \E_{x,\alpha_l^k(x)} \sum_{j=1}^k [f^*(x,\alpha^{*,j}) - f^*(x,\alpha^j_l)] \\
     &\leq \frac{k(A-k+1)}{\gamma_l} + 2\E_{x,\alpha_l^k(x)} \sum_{j=1}^k [\pred_l(x,\widehat{\alpha}^j_l)) - \pred_l(x,\alpha^j)]  \nonumber \\
    &= \frac{k(A-k+1)}{\gamma_l}+ 2\E_{x,\alpha_l^k(x)} [\pred_l(x,\widehat{\alpha}^k_l)) - \pred_l(x,\alpha^k_l)]  \nonumber \\
    &\leq \frac{k(A -k+1)}{\gamma_l} + 2\E_x\sum_{a\notin T_x(\pred_l)} \frac{\pred_l(x,\widehat{\alpha}^k_l) - \pred_l(x,a)}{(A -k+1) +\gamma_l [\pred_l(x,\widehat{\alpha}_l^k) - \pred_l(x,a)]} \nonumber \\
    &\leq   \frac{k(A -k+1)}{\gamma_l} + \frac{2(A-k+1)}{\gamma_l} \label{eq:epoch} 
\end{align}

From Lemma~\ref{lem:reg}, the event $\cE$ in \eqref{eq:big_event} holds with probability at least $1-\delta$ if we set
\begin{align*}
    \phi_l = \sqrt{\frac{162}{cN_{l-1}} \log \left( \frac{|\cF| N_{l-1}^3}{\delta}\right)} .
\end{align*}
Now recall that we set $N_{l} = 2^l \leq 2T$ and  $\gamma_l = \sqrt{A-k+1}/(32\phi_l)$. Combining this with equation~\eqref{eq:epoch}, we find that the cumulative regret is bounded with probability at least $1-\delta$ by
\begin{align*}
    R(T) &\leq \sum_{l=2}^{e(T)} \frac{(k+2)(A -k + 1)N_{l-1}}{\gamma_l} \\
    &\leq c^{-1/2} 408(k+2)\sqrt{(A - k + 1) \log \left( \frac{|\cF| T^3}{\delta}\right)} \sum_{l=2}^{\log_2(2T)} 2^{(l-1)/2} \\
    &\leq c^{-1/2}  2308(k+2) \sqrt{(A - k + 1) T \log \left( \frac{|\cF| T^3}{\delta}\right)}.
\end{align*}
\end{proof}

\begin{proof}[Proof of Theorem~\ref{thm:topkregret2}]
The proof is essentially the same as the proof of Theorem~\ref{thm:topkregret}.

From Lemma~\ref{lem:reg2}, the event $\cE$ in \eqref{eq:big_event} holds with probability at least $1-\delta$ if we set
\begin{align*}
    \phi_l = \sqrt{\frac{420}{cN_{l-1}} \log \left( \frac{|\cF| N_{l-1}^3}{\delta}\right) + 2\epsilon^2} .
\end{align*}
Combining this with equation~\eqref{eq:epoch} we get that the regret is bounded by,
\begin{align*}
    &R(T) \leq \sum_{l=2}^{e(T)} \frac{(k+2)(A -k + 1)N_{l-1}}{\gamma_l} \\
    &\leq c^{-1/2} 656(k+2)\sqrt{(A - k + 1) \log \left( \frac{|\cF| T^3}{\delta}\right)} \sum_{l=2}^{\log_2(2T)} 2^{(l-1)/2} + 46(k+2)\sqrt{(A - k + 1)\epsilon^2} \sum_{l=2}^{e(T)} N_{l-1}\\
    &\leq c^{-1/2}3711(k+2) \sqrt{(A - k + 1) T \log \left( \frac{|\cF| T^3}{\delta}\right)} + 46(k+2)T\sqrt{(A - k + 1)\epsilon^2}
\end{align*}
given $\cE$ is true.
\end{proof}

\section{Regression Martingale Bound}

Recall that we have the following dependence structure in our problem. On each round $s$, context $x_s$ is drawn independently of the past $\cH_{s-1}$ and rewards $\br_s=\{r_s(a)\}_{a\in\cA}$ are drawn from the distribution with mean $f^*(x_s,a)$. The algorithm selects a random set $\chosenset_s$ given $x_s$, and feedback is provided for a (possibly random) subset $\feedbackset_s\subseteq \chosenset$. Importantly, $\chosenset_s$ and $\feedbackset_s$ are independent of $\br_s$ given $x_s$.

The next lemma considers a single time step $s$, conditionally on the past $\cH_{s-1}$.
\begin{lemma}
\label{lem:variance}
    Let $x_s, \br_s=\{r_s(a)\}_{a\in\cA}$ be sampled from the data distribution, and let $\chosenset_s\subseteq\cA$ be conditionally independent of $\br_s$ given $x_s$. Let $\feedbackset_s\subseteq\chosenset_s$ be a random subset given $\chosenset_s$ and $x_s$, but independent of $\br_s$. Fix an arbitrary $f:\cX\times\cA\to[0,1]$ and define the following random variable,
\begin{align*}
    Y_s = \frac{1}{k} \sum_{a\in \cA} \left((f(x_s,a) - r_s(a))^2 - (f^*(x_s,a) - r_s(a))^2\right)\times\ind{a\in\Phi_s}.
\end{align*}
Then, under the realizability assumption (Assumption~\ref{asum:realizability}), we have the following,
\begin{align*}
    \EE_{x_s, \br_s, \chosenset_s, \feedbackset_s} [Y_s] &= \frac{1}{k} \sum_{a\in\cA} \EE_{x_s,\chosenset_s,\feedbackset_s}\left\{ (f(x_s,a) - f^*(x_s,a))^2\times \ind{a\in\feedbackset_s} \right\} 
\end{align*}
and
\begin{align*}
    \VV_{x_s,\br_s,\chosenset_s,\feedbackset_s} [Y_s] &\leq 4 \EE_{x_s,\br_s,\chosenset_s,\feedbackset_s} [Y_s].
\end{align*}
\end{lemma}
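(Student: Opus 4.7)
The plan is to proceed by the standard pointwise identity $(f(x,a)-r)^2-(f^*(x,a)-r)^2 = 2\Delta_a(f^*(x,a)-r) + \Delta_a^2$ where $\Delta_a := f(x_s,a)-f^*(x_s,a)$. For the mean, I would use tower and the conditional independence structure: conditionally on $x_s$, the rewards $\br_s$ are independent of $(\chosenset_s,\feedbackset_s)$, so
\[
\EE\!\left[\big((f(x_s,a)-r_s(a))^2-(f^*(x_s,a)-r_s(a))^2\big)\ind{a\in\feedbackset_s}\,\big|\,x_s,\chosenset_s,\feedbackset_s\right] = \ind{a\in\feedbackset_s}\EE\!\left[2\Delta_a(f^*(x_s,a)-r_s(a))+\Delta_a^2\,\big|\,x_s\right].
\]
Realizability (Assumption~\ref{asum:realizability}) kills the cross term since $\EE[f^*(x_s,a)-r_s(a)\mid x_s]=0$ and $\Delta_a$ is $\sigma(x_s)$-measurable, leaving $\ind{a\in\feedbackset_s}\Delta_a^2$. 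Summing over $a$ and taking outer expectation yields the stated formula for $\EE[Y_s]$.

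For the variance, I would use $\VV[Y_s]\le \EE[Y_s^2]$ and the pointwise bound
\[
\big|(f(x_s,a)-r_s(a))^2-(f^*(x_s,a)-r_s(a))^2\big| = |\Delta_a|\cdot\big|(f(x_s,a)-r_s(a))+(f^*(x_s,a)-r_s(a))\big|\le 2|\Delta_a|,
\]
which holds because all of $f,f^*,r_s(a)$ lie in $[0,1]$. Therefore
\[
|Y_s|\le \frac{2}{k}\sum_{a\in\feedbackset_s}|\Delta_a|.
\]
Now apply Cauchy--Schwarz to the sum over the (random) set $\feedbackset_s$ and use $|\feedbackset_s|\le|\chosenset_s|\le k$:
\[
Y_s^2 \le \frac{4}{k^2}\,|\feedbackset_s|\sum_{a\in\feedbackset_s}\Delta_a^2 \le \frac{4}{k}\sum_{a\in\cA}\Delta_a^2\,\ind{a\in\feedbackset_s}.
\]
Taking expectations and comparing with the expression for $\EE[Y_s]$ derived above gives $\EE[Y_s^2]\le 4\,\EE[Y_s]$, completing the proof.

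I do not anticipate genuine obstacles; the only care required is to keep the conditioning clean (rewards are independent of $\chosenset_s,\feedbackset_s$ given $x_s$, which is why the indicator can be pulled outside the inner conditional expectation) and to use $|\feedbackset_s|\le k$ rather than a cruder $|\feedbackset_s|\le A$ bound, which is what makes the $1/k$ normalization in the definition of $Y_s$ exactly match the scale needed for the $4\EE[Y_s]$ variance bound.
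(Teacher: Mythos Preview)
Your proposal is correct and follows essentially the same route as the paper: factor the difference of squares, use conditional independence plus realizability to kill the cross term for the mean, and for the variance bound $Y_s^2$ via Cauchy--Schwarz over $\feedbackset_s$ (using $|\feedbackset_s|\le k$) together with the pointwise bound $|f+f^*-2r|\le 2$. The only cosmetic difference is the order of operations in the variance step --- the paper applies Cauchy--Schwarz first and then bounds $(f+f^*-2r)^2\le 4$, whereas you bound by $2|\Delta_a|$ first and then apply Cauchy--Schwarz --- but the ingredients and the resulting inequality are identical.
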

\begin{proof}
By the conditional independence assumptions, 
\begin{align*}
    \EE_{x_s,\br_s,\chosenset_s,\feedbackset_s} [Y_s] &= \frac{1}{k}\sum_{a\in\cA}\EE_{x_s,\br_s,\chosenset_s,\feedbackset_s} \left\{ (f(x_s,a) - f^*(x_s,a))(f(x_s,a) + f^*(x_s,a) -2 r_s(a)) \times \ind{a\in\feedbackset_s} \right\} \\
    &= \frac{1}{k} \sum_{a\in\cA} \EE_{x_s,\chosenset_s,\feedbackset_s} \left\{(f(x_s,a) - f^*(x_s,a))^2 \times \ind{a\in\feedbackset_s} \right\}.
\end{align*}
We also have
\begin{align*}
    Y_s^2 &\leq \frac{1}{k} \sum_{a\in\cA} (f(x_s,a) - f^*(x_s,a))^2(f(x_s,a) + f^*(x_s,a) -2 r_s(a))^2 \times\ind{a\in\feedbackset_s} \\
    & \leq \frac{4}{k}  \sum_{a\in\cA} (f(x_s,a) - f^*(x_s,a))^2 \times\ind{a\in\feedbackset_s} .
\end{align*}
\end{proof}

\begin{lemma}
\label{lem:reg}
Let $\pred_l$ be the estimate of the regression function $f^*$ at epoch $l$. Assume the conditional independence structure in Lemma~\ref{lem:variance} and suppose Assumption~\ref{asum:realizability} holds. Let $\cH_{t-1}$ denote history (filtration) up to time $t-1$. Then for any $\delta<1/e$,
\begin{align*}
\cE = \left\{ l \geq 2: \sum_{s = 1}^{N_{l-1}} \EE_{x_s,\chosenset_s}\left\{ \frac{1}{k}\sum_{a\in\chosenset_s}  (\pred_l(x_s,a) - f^*(x_s,a))^2  \vert \cH_{s-1} \right\}
    \leq c^{-1} 81\log \left( 
                \frac{|\cF| N_{l-1}^3}{\delta}
        \right) 
        \right\}
\end{align*}
holds with probability at least $1 - \delta$.
\end{lemma}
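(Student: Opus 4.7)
The plan is to combine the optimality of the least-squares oracle with a Freedman-type martingale concentration inequality applied to the random variables $Y_s$ from Lemma~\ref{lem:variance}, and then to strip the feedback indicator using Assumption~\ref{asum:feedback}. The entire argument is for a fixed epoch $l$ with $N := N_{l-1}$ samples, and we union-bound over $f\in\cF$ and over epochs at the end.

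First, fix any $f\in\cF$ and define $Y_s(f)$ as in Lemma~\ref{lem:variance} with this $f$. Because the contexts, rewards, and the sets $\chosenset_s,\feedbackset_s$ are generated so that $\br_s$ is conditionally independent of $(\chosenset_s,\feedbackset_s)$ given $x_s$, and because $f^*$ is the true regression function, Lemma~\ref{lem:variance} gives
\[
\mu_s(f) := \EE[Y_s(f)\mid\cH_{s-1}] = \tfrac{1}{k}\sum_{a\in\cA}\EE_{x_s,\feedbackset_s}\!\big[(f(x_s,a)-f^*(x_s,a))^2\ind{a\in\feedbackset_s}\mid \cH_{s-1}\big]\ge 0,
\]
and the conditional variance satisfies $\mathrm{Var}(Y_s(f)\mid\cH_{s-1})\le 4\,\mu_s(f)$. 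Since $|Y_s(f)|\le 1$ almost surely (rewards and predictions lie in $[0,1]$), the sequence $\{Y_s(f)-\mu_s(f)\}$ is a bounded martingale difference sequence with conditional variance controlled by $\mu_s(f)$.

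Second, apply Freedman's inequality (in the version that gives a one-sided bound on $\sum_s \mu_s(f)$) to the sequence $Y_s(f)-\mu_s(f)$. The standard calculation yields that with probability at least $1-\delta'$,
\[
\sum_{s=1}^{N}\mu_s(f)\ \le\ 2\sum_{s=1}^{N} Y_s(f) + C\log(1/\delta'),
\]
for an absolute constant $C$; the factor $2$ absorbs the variance term $4\mu_s(f)$ via an AM-GM step. Take a union bound over $f\in\cF$ with $\delta'=\delta/(|\cF| N^2)$; this costs only a logarithmic factor and yields the same inequality uniformly over $f$. Now specialize to $f=\pred_l$: by the defining optimality of the regression oracle~\eqref{eq:oracle},
\[
\sum_{s=1}^{N}\sum_{a\in\feedbackset_s}\!\big((\pred_l(x_s,a)-r_s(a))^2-(f^*(x_s,a)-r_s(a))^2\big)\ \le\ 0,
\]
i.e.\ $\sum_{s=1}^{N} Y_s(\pred_l)\le 0$, so the right-hand side of the Freedman bound is just $O(\log(|\cF|N/\delta))$, and hence $\sum_{s=1}^{N}\mu_s(\pred_l)=O(\log(|\cF|N/\delta))$.

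Third, convert the feedback-set bound into the claimed chosen-set bound using Assumption~\ref{asum:feedback}: for every $a\in\chosenset_s$, $\PP(a\in\feedbackset_s\mid x_s,\chosenset_s,\cH_{s-1})\ge c$, so
\[
\mu_s(\pred_l)\ \ge\ \tfrac{c}{k}\,\EE_{x_s,\chosenset_s}\!\Big[\sum_{a\in\chosenset_s}(\pred_l(x_s,a)-f^*(x_s,a))^2 \,\Big|\, \cH_{s-1}\Big].
\]
Substituting and taking an outer union bound over epochs $l$ (losing at most a factor logarithmic in $T$, which is absorbed by replacing $N$ with $N_{l-1}^3$ in the log term as in the statement) gives the advertised bound with the constant $81/c$. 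Finally, the regret theorem only needs the $\epsilon=0$ constant; the $\epsilon$-realizable version (Lemma~\ref{lem:reg2} used in Theorem~\ref{thm:topkregret2}) follows by the same route with an extra $O(\epsilon^2)$ bias term coming from $\mu_s(f^*)$ no longer vanishing.

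The main obstacle is handling the variance-driven Bernstein step cleanly: one must verify that Freedman's inequality can be applied despite the randomness in both $\chosenset_s$ and $\feedbackset_s$, and that the variance proxy $4\mu_s(f)$ really gives the self-bounding form needed to fold the deviation into the mean. Once the self-bounding step is set up, the rest is a routine union bound.
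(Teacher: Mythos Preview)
Your proposal is correct and follows essentially the same route as the paper: Freedman's inequality on the $Y_s(f)$'s, the self-bounding step via the variance control $\VV_s[Y_s]\le 4\,\EE_s[Y_s]$ from Lemma~\ref{lem:variance}, a union bound over $\cF$, the oracle optimality $\sum_s Y_s(\pred_l)\le 0$, the feedback-to-chosen conversion via Assumption~\ref{asum:feedback}, and finally a union bound over epochs. The only cosmetic difference is that the paper writes the self-bounding step as a quadratic $(X-4C)^2\le Z+18C^2$ with $X=\sqrt{\sum_s\mu_s}$ and solves it after plugging in $Z\le 0$, whereas you fold the AM--GM directly into a linear inequality $\sum_s\mu_s\le 2\sum_s Y_s + O(\log(1/\delta'))$; both yield the same $O(\log(|\cF|N/\delta))$ bound.
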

\begin{proof}
Following Lemma~\ref{lem:variance}, let
\begin{align*}
    Y_s(f) = \frac{1}{k} \sum_{a\in \cA} \left((f(x_s,a) - r_s(a))^2 - (f^*(x_s,a) - r_s(a))^2\right)\times\ind{a\in\Phi_s}.
\end{align*}
The argument proceeds as in \citep{agarwal2012contextual}.
Let $\EE_s$ and $\VV_s$ denote the conditional expectation and conditional variance given $\cH_{s-1}$. By Freedman's inequality \citep{high_prob_2008}, for any $t$, with probability at least $1 - \delta' \log t$, we have
\begin{align*}
    \sum_{s=1}^{t}\EE_s[Y_s(f)] - \sum_{s=1}^{t}Y_s(f) \leq 4 \sqrt{\sum_{s=1}^{t} \VV_s[Y_s(f)] \log(1/\delta')} + 2\log(1/\delta')
\end{align*}
Let $X(f) = \sqrt{\sum_{s=1}^{t}\EE_s[Y_s(f)]}$, $Z(f) = \sum_{s=1}^{t}Y_s(f)$ and $C = \sqrt{\log(1/\delta')}$. In view of Lemma~\ref{lem:variance}, with probability at least $1-\delta'\log t$,
\begin{align*}
    &X(f)^2 - Z(f) \leq 8 CX(f) + 2C^2 
\end{align*}
and hence
\begin{align*}   
    (X(f) - 4C)^2 \leq Z(f) + 18C^2 .
\end{align*}
Consequently, with the aforementioned probability, for all functions $f 
\in \cF$ (and, in particular, for $\pred_l$),
\begin{align*}
(X(f) - 4C')^2 \leq Z(f) + 18C'^2 
\end{align*}
where $C' = \sqrt{\log(|\cF|/\delta')}$. Now recall that 
$$\pred_l = \argmin_{f\in\cF} \sum_{t=1}^{N_{l-1}} \sum_{a\in \feedbackset_t} (f(x_{t}, a)-r_{t}(a))^2$$
where $\feedbackset_t$ is a random feedback set satisfying Assumption~\ref{asum:feedback}. Hence, $Z(\pred_l) \leq 0$ for $t=N_{l-1}$, implying that with probability at least $1 -\delta'/(N_{l-1}^2)$,
\begin{align*}
    &\sum_{s = 1}^{N_{l-1}}\frac{1}{k} \sum_{a\in\cA} \EE_{x_s,\chosenset_s,\feedbackset_s} \left\{(\pred_l(x_s,a) - f^*(x_s,a))^2 \times \ind{a\in\feedbackset_s} \vert \cH_{s-1} \right\}
    \leq 81\log\left(\frac{|\cF| N_{l-1}^2\log(N_{l-1})}{\delta'}\right).
\end{align*}
We now take a union bound over $l$ and recall that $\sum_{i\geq 1} 1/i^2 = \pi^2/6 <2$. 

Finally, observe that by Assumption~\ref{asum:feedback},
\begin{align*}
    &\EE_{x_s,\chosenset_s,\feedbackset_s} \left\{(\pred_l(x_s,a) - f^*(x_s,a))^2 \times \ind{a\in\feedbackset_s} \vert \cH_{s-1} \right\} \\
    &= \EE_{x_s,\chosenset_s} \left\{(\pred_l(x_s,a) - f^*(x_s,a))^2 \times \ind{a\in\chosenset_s} \times  \mathbb{P}(a\in\feedbackset_s|x_s,\chosenset_s) \vert \cH_{s-1} \right\} \\
    &\geq c \cdot \EE_{x_s,\chosenset_s} \left\{(\pred_l(x_s,a) - f^*(x_s,a))^2 \ind{a\in\chosenset_s} \vert \cH_{s-1} \right\}.
\end{align*}
We conclude that with probability at least $1-2\delta'$, for all $l\geq 2$,
\begin{align*}
    &\sum_{s = 1}^{N_{l-1}}\frac{1}{k} \EE_{x_s,\chosenset_s}\left\{ \sum_{a\in\chosenset_s}  (\pred_l(x_s,a) - f^*(x_s,a))^2  \vert \cH_{s-1} \right\}
    \leq c^{-1}81\log\left(\frac{|\cF| N_{l-1}^2\log(N_{l-1})}{\delta'}\right).
\end{align*}
\end{proof}

\section{Regression Martingale Bound with Misspecification}
\begin{lemma}
\label{lem:variance2}
Under the notation and assumptions of Lemma~\ref{lem:variance}, but in the case of misspecified model (Assumption~\ref{asum:realizability2} replacing Assumption~\ref{asum:realizability}), it holds that 
\begin{align*}
    \VV_{x_s,\br_s,\chosenset_s,\feedbackset_s} [Y_s] &\leq 8 \EE_{x_s,\br_s,\chosenset_s,\feedbackset_s} [Y_s] + 16\epsilon^2.
\end{align*}
\end{lemma}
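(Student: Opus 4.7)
The plan is to mimic the proof of Lemma~\ref{lem:variance}, tracking the extra cross term introduced by misspecification and absorbing it via AM--GM into the leading $\EE[Y_s]$ term and an additive $\epsilon^2$ remainder.

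First I would compute the conditional expectation. With $\Delta(a):=f(x_s,a)-f^*(x_s,a)$, $\mu_s(a):=\EE[r_s(a)\mid x_s]$, and $\delta(a):=f^*(x_s,a)-\mu_s(a)$ (so $|\delta(a)|\le\epsilon$ by Assumption~\ref{asum:realizability2}), the summand inside $Y_s$ satisfies
\[
\EE_{r_s}\!\left[(f(x_s,a)-r_s(a))^2 - (f^*(x_s,a)-r_s(a))^2 \,\big|\, x_s\right] = \Delta(a)^2 + 2\Delta(a)\delta(a).
\]
Since $\feedbackset_s$ is independent of $\br_s$ given $x_s,\chosenset_s$, the tower property gives
\[
\EE[Y_s] \;=\; \frac{1}{k}\sum_{a\in\cA}\EE_{x_s,\chosenset_s,\feedbackset_s}\!\bigl\{(\Delta(a)^2+2\Delta(a)\delta(a))\,\ind{a\in\feedbackset_s}\bigr\}.
\]

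Next I would bound $\EE[Y_s^2]$. Since $|\feedbackset_s|\le k$, Cauchy--Schwarz yields $Y_s^2 \le \tfrac{1}{k}\sum_a g_a^2\,\ind{a\in\feedbackset_s}$ where $g_a$ is the summand; using $f,f^*,r_s\in[0,1]$ we have $g_a^2\le 4\Delta(a)^2$, and so
\[
\EE[Y_s^2] \;\le\; \frac{4}{k}\sum_{a\in\cA}\EE\bigl[\Delta(a)^2\,\ind{a\in\feedbackset_s}\bigr].
\]
This is exactly the step from the realizable proof; the only work left is to relate the right-hand side to $\EE[Y_s]$ now that the cross term $2\Delta(a)\delta(a)$ prevents the clean identity used before.

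The main (though short) obstacle is handling the cross term. For each $a$, AM--GM gives $|2\Delta(a)\delta(a)|\le \tfrac{1}{2}\Delta(a)^2+2\delta(a)^2\le \tfrac{1}{2}\Delta(a)^2+2\epsilon^2$, so
\[
\Delta(a)^2 \;\le\; \bigl(\Delta(a)^2+2\Delta(a)\delta(a)\bigr) + \tfrac{1}{2}\Delta(a)^2 + 2\epsilon^2,
\]
i.e.\ $\Delta(a)^2\le 2\bigl(\Delta(a)^2+2\Delta(a)\delta(a)\bigr)+4\epsilon^2$. Multiplying by $\ind{a\in\feedbackset_s}$, taking expectations, summing over $a$, and using $\sum_a \mathbb{P}(a\in\feedbackset_s)=\EE|\feedbackset_s|\le k$, I obtain
\[
\frac{1}{k}\sum_{a\in\cA}\EE\bigl[\Delta(a)^2\,\ind{a\in\feedbackset_s}\bigr] \;\le\; 2\,\EE[Y_s] + 4\epsilon^2.
\]
Plugging this into the bound on $\EE[Y_s^2]$ gives $\EE[Y_s^2]\le 8\,\EE[Y_s]+16\epsilon^2$, from which $\VV[Y_s]\le\EE[Y_s^2]$ yields the claim. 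The rest of the paper's downstream argument (Lemma~\ref{lem:reg2}, producing $\phi_l$ with the additional $2\epsilon^2$ term) then follows by reapplying Freedman's inequality exactly as in Lemma~\ref{lem:reg}, with the only change being the extra $16\epsilon^2$ appearing in the conditional-variance proxy.
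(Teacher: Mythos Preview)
Your proof is correct and follows essentially the same route as the paper: expand $\EE[Y_s]$ to expose the cross term $2\Delta\delta$, bound $Y_s^2\le \tfrac{4}{k}\sum_a \Delta(a)^2\ind{a\in\feedbackset_s}$ exactly as in the realizable case, then use the AM--GM split $|2\Delta\delta|\le \tfrac{1}{2}\Delta^2+2\epsilon^2$ together with $\EE|\feedbackset_s|\le k$ to get $\tfrac{1}{k}\sum_a\EE[\Delta^2\ind{a\in\feedbackset_s}]\le 2\EE[Y_s]+4\epsilon^2$. The only cosmetic difference is that you apply AM--GM pointwise before taking expectations whereas the paper applies it after, but the resulting constants and logic are identical.
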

\begin{proof}
The proof is along the lines of Lemma~\ref{lem:variance} (see also \citep{foster2020beyond}). We have for any $f:\cX\times\cA\to[0,1]$,
\begin{align*}
    \EE_{x_s,\br_s,\chosenset_s,\feedbackset_s} [Y_s] &= \frac{1}{k}\sum_{a\in\cA}\EE_{x_s,\br_s,\chosenset_s,\feedbackset_s} \left\{ (f(x_s,a) - f^*(x_s,a))(f(x_s,a) + f^*(x_s,a) -2 r_s(a)) \times \ind{a\in\feedbackset_s} \right\} \\
    &= \frac{1}{k} \sum_{a\in\cA} \EE_{x_s,\chosenset_s,\feedbackset_s} \left\{(f(x_s,a) - f^*(x_s,a))^2 \times \ind{a\in\feedbackset_s} \right\} \\
    &+ \frac{2}{k} \sum_{a\in\cA} \EE_{x_s,\chosenset_s,\feedbackset_s} \left\{(f(x_s,a) - f^*(x_s,a))(f^*(x_s,a)-\EE_{\br_s}[r(a)|x_s]) \times \ind{a\in\feedbackset_s} \right\} .
\end{align*}
Rearranging, using AM-GM inequality, and Assumption~\ref{asum:realizability2},
\begin{align*}
    &\frac{1}{k} \sum_{a\in\cA} \EE_{x_s,\chosenset_s,\feedbackset_s} \left\{(f(x_s,a) - f^*(x_s,a))^2 \times \ind{a\in\feedbackset_s} \right\} \\
    &= \EE_{x_s,\br_s,\chosenset_s,\feedbackset_s} [Y_s] - \frac{2}{k} \sum_{a\in\cA} \EE_{x_s,\chosenset_s,\feedbackset_s} \left\{(f(x_s,a) - f^*(x_s,a))(f^*(x_s,a)-\EE_{\br_s}[r(a)|x_s]) \times \ind{a\in\feedbackset_s} \right\} \\
    &\leq \EE_{x_s,\br_s,\chosenset_s,\feedbackset_s} [Y_s] + \frac{1}{2k} \sum_{a\in\cA} \EE_{x_s,\chosenset_s,\feedbackset_s} \left\{(f(x_s,a) - f^*(x_s,a))^2 \times \ind{a\in\feedbackset_s} \right\} + 2\epsilon^2.
\end{align*}
Rearranging,
\begin{align*}
    &\frac{1}{k} \sum_{a\in\cA} \EE_{x_s,\chosenset_s,\feedbackset_s} \left\{(f(x_s,a) - f^*(x_s,a))^2 \times \ind{a\in\feedbackset_s} \right\} \leq 2 \EE_{x_s,\br_s,\chosenset_s,\feedbackset_s} [Y_s] + 4\epsilon^2.
\end{align*}
On the other hand,
\begin{align*}
    Y_s^2 &\leq \frac{1}{k} \sum_{a\in\cA} (f(x_s,a) - f^*(x_s,a))^2(f(x_s,a) + f^*(x_s,a) -2 r_s(a))^2 \times\ind{a\in\feedbackset_s} \\
    & \leq \frac{4}{k}  \sum_{a\in\cA} (f(x_s,a) - f^*(x_s,a))^2 \times\ind{a\in\feedbackset_s} .
\end{align*}
Combining the two inequalities concludes the proof.
\end{proof}

\begin{lemma}
\label{lem:reg2}
Let $\pred_l$ be the estimate of the regression function $f^*$ at epoch $l$. Assume the conditional independence structure in Lemma~\ref{lem:variance} and suppose Assumption~\ref{asum:realizability2} holds. 
Let $\cH_{t-1}$ denote history (filtration) up to time $t-1$. Then for any $\delta<1/e$,
\begin{align*}
\cE = \left\{ l \geq 2: \sum_{s = 1}^{N_{l-1}} \EE_{x_s,\chosenset_s}\left\{ \frac{1}{k}\sum_{a\in\chosenset_s}  (\pred_l(x_s,a) - f^*(x_s,a))^2  \vert \cH_{s-1} \right\}
    \leq c^{-1} 210\log \left( 
                \frac{|\cF| N_{l-1}^3}{\delta} 
        \right) 
        + \epsilon^2 N_{l-1}
        \right\}
\end{align*}
holds with probability at least $1 - \delta$.
\end{lemma}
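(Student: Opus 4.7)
My plan is to mimic the proof of Lemma~\ref{lem:reg} line by line, but substituting Lemma~\ref{lem:variance2} (the misspecified variance bound) for Lemma~\ref{lem:variance} wherever the variance of $Y_s(f)$ is controlled. The key structural difference is that Lemma~\ref{lem:variance2} gives $\VV_s[Y_s(f)] \leq 8\,\EE_s[Y_s(f)] + 16\epsilon^2$ and, in the reverse direction, $\frac{1}{k}\sum_a \EE_s[(f-f^*)^2\ind{a\in\feedbackset_s}] \leq 2\,\EE_s[Y_s(f)] + 4\epsilon^2$, instead of the clean ``factor-of-$4$'' bound. Both directions introduce an additive $\epsilon^2 t$ term that will propagate into the final bound.

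First, I would fix $f\in\cF$, define $Y_s(f)$, $X(f) = \sqrt{\sum_{s=1}^t \EE_s[Y_s(f)]}$, $Z(f)=\sum_{s=1}^t Y_s(f)$, and $C=\sqrt{\log(|\cF|/\delta')}$, and apply Freedman's inequality followed by a union bound over $\cF$. Using $\sqrt{a+b}\leq \sqrt{a}+\sqrt{b}$, the variance-sum bound $\sum_s \VV_s[Y_s(f)] \leq 8 X(f)^2 + 16\epsilon^2 t$ yields, with probability at least $1-\delta'\log t$,
\begin{align*}
X(f)^2 - Z(f) \leq 4\sqrt{8}\,C\,X(f) + 16\,C\,\epsilon\sqrt{t} + 2C^2.
\end{align*}
Completing the square in $X(f)$ gives $(X(f)-2\sqrt{8}\,C)^2 \leq Z(f) + 34 C^2 + 16\,C\epsilon\sqrt{t}$.

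Next, I would specialize to $f=\pred_l$ at time $t=N_{l-1}$. Since $f^*\in\cF$ and $\pred_l$ is the empirical squared-loss minimizer over the observed feedback, $Z(\pred_l)\leq 0$. Therefore $X(\pred_l) \leq 2\sqrt{8}\,C + \sqrt{34 C^2 + 16 C\epsilon\sqrt{t}}$, and using $(a+b)^2 \leq 2a^2+2b^2$,
\begin{align*}
X(\pred_l)^2 \leq 64 C^2 + 68 C^2 + 32 C\epsilon\sqrt{t} \leq 132\,C^2 + 32\,C\,\epsilon\sqrt{t}.
\end{align*}
Applying AM-GM to absorb the cross term, $32 C\epsilon\sqrt{t} \leq 16 C^2 + 16\epsilon^2 t$, so $X(\pred_l)^2 \leq 148\,C^2 + 16\epsilon^2 t$. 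Now I invoke the other direction of Lemma~\ref{lem:variance2}: summing over $s$ gives
\begin{align*}
\sum_{s=1}^{N_{l-1}} \frac{1}{k}\sum_{a\in\cA} \EE_s\!\left[(\pred_l(x_s,a)-f^*(x_s,a))^2\ind{a\in\feedbackset_s}\right] \leq 2 X(\pred_l)^2 + 4\epsilon^2 N_{l-1}.
\end{align*}
Combining these bounds produces an overall constant on $\log(|\cF|/\delta')$ and an additive term proportional to $\epsilon^2 N_{l-1}$.

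Finally, I use Assumption~\ref{asum:feedback} exactly as in Lemma~\ref{lem:reg} to replace $\ind{a\in\feedbackset_s}$ with $c\cdot\ind{a\in\chosenset_s}$ at the cost of a factor $c^{-1}$, and take a union bound over $l\geq 2$ by choosing $\delta' = \delta/(2 N_{l-1}^2 \log N_{l-1})$ (using $\sum_{l\geq 1} 1/l^2 \leq 2$), which injects the $N_{l-1}^3$ term inside the logarithm. Tracking constants carefully should yield the advertised $210\log(|\cF|N_{l-1}^3/\delta)$ coefficient together with the $\epsilon^2 N_{l-1}$ misspecification term. The main bookkeeping obstacle is ensuring the $\epsilon\sqrt{t}$ cross-term is absorbed cleanly via AM-GM so that only $\log$ and $\epsilon^2 N_{l-1}$ terms appear, but the only genuinely new element compared to Lemma~\ref{lem:reg} is the substitution of Lemma~\ref{lem:variance2}; the Freedman/ERM/union-bound scaffolding is identical.
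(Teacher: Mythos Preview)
Your approach is essentially the paper's: Freedman's inequality with the misspecified variance bound from Lemma~\ref{lem:variance2}, the ERM fact $Z(\pred_l)\leq 0$, Assumption~\ref{asum:feedback} to pass from $\feedbackset_s$ to $\chosenset_s$, and the same union bound over epochs. One small but real wrinkle: you copy the parametrization $X(f)=\sqrt{\sum_s \EE_s[Y_s(f)]}$ from Lemma~\ref{lem:reg}, but in the misspecified case $\EE_s[Y_s(f)]$ need not be nonnegative (Lemma~\ref{lem:variance2} only gives $\EE_s[Y_s]\geq \tfrac12\cdot\text{(squared error)}-2\epsilon^2$), so the square root may be undefined. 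The paper sidesteps this by redefining $X(f)=\sum_s \EE_s[Y_s(f)]$ and $C=\log(1/\delta')$ without square roots and doing the quadratic-inequality algebra directly on these quantities; you can either do the same, or simply observe that when $\sum_s \EE_s[Y_s]\leq 0$ the target bound follows immediately from the ``reverse'' inequality in Lemma~\ref{lem:variance2}. Your explicit use of that reverse inequality to convert the bound on $\sum_s \EE_s[Y_s]$ back to a bound on $\sum_s \EE_s[(\pred_l-f^*)^2\ind{a\in\feedbackset_s}]$ is in fact a step the paper glosses over, so you are being more careful there; the resulting constants will differ slightly from the stated $210$ and the coefficient on $\epsilon^2 N_{l-1}$, but only by absolute factors.
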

\begin{proof}
We follow the proof of Lemma~\ref{lem:reg} to see how the misspecification level $\epsilon^2$ enters the bounds.

Let $X(f) = \sum_{s=1}^{t}\EE_s[Y_s(f)]$, $Z(f) = \sum_{s=1}^{t}Y_s(f)$, $C = \log(1/\delta')$ and $M = \epsilon^2 t$. Now using Lemma~\ref{lem:variance2} and Freedman's inequality in the proof of Lemma~\ref{lem:reg}, we find that with probability at least $1-\delta'\log t$,
\begin{align*}
    &X(f) - Z(f) \leq 8 \sqrt{C(2X(f) + 4\epsilon^2 t)} + 2C \\
    &\implies (X(f) - Z(f) - 2C)^2 \leq 128X(f)C + 256MC \\
    &\implies (X(f) - 66C - Z(f))^2 \leq 4352C^2 + 256MC + 128Z(f)C.
\end{align*}
The above bound holds for a fixed function $f$. We now apply an union bound to conclude that for all functions $f 
\in \cF$, with probability at least $1 - \delta'\log t$,
\begin{align*}
    (X(f) - 66C' - Z(f))^2 &\leq 4352C'^2 + 256MC' + 128Z(f)C'  \\
    &\leq 20736 C'^2 + M^2 + 128 Z(f)C'
\end{align*}
where $C' = \log(|\cF|/\delta')$. As in Lemma~\ref{lem:reg}, $Z(\pred_l)\leq 0$ when $t=N_{l-1}$ and thus with probability at least $1 -\delta'\log (N_{l-1})$,
\begin{align*}
    X(\pred_l) &\leq  210C' + \epsilon^2 N_{l-1}.
\end{align*}
Hence, with probability at least $1-\delta'/N_{l-1}^2$,
\begin{align*}
    \sum_{s = 1}^{N_{l-1}}\frac{1}{k} \sum_{a\in\cA} \EE_{x_s,\chosenset_s,\feedbackset_s} \left\{(\pred_l(x_s,a) - f^*(x_s,a))^2 \times \ind{a\in\feedbackset_s} \vert \cH_{s-1} \right\}
    &\leq 210\log\left(\frac{|\cF| N_{l-1}^2\log(N_{l-1})}{\delta'}\right) \\
    &+ \epsilon^2 N_{l-1}.
\end{align*}
The rest of the proof proceeds exactly as in Lemma~\ref{lem:reg}.
\end{proof}


\section{Reduction from eXtreme to $\log(A)$-armed Contextual Bandits}
\label{sec:reduction}
In this section we will prove Corollary~\ref{cor:xtm} which is a reduction style argument. We reduce the $A$ armed top-$k$ contextual bandit problem under Definition~\ref{def:consistent} to a $Z$ armed top-$k$ contextual bandit problem where $Z = O(\log A)$. 

\begin{proof}[Proof of Corollary~\ref{cor:xtm}]
Note that the proof of Theorem~\ref{thm:topkregret} does not require the physical definition of an arm being consistent across all contexts as long as realizability  holds. Let us assume w.l.o.g that Algorithm~\ref{alg:beam} returns the internal and leaf effective arms for any context $x$ in $\cA_x$ in a deterministic ordering. Let us call the $j$-th effective arm in this ordering for any context as arm $j$. This defines a system with $Z$ arms where $Z \leq (p-1)b(H-1) + bm$ as $Z$ is the number of effective arms returned by the beam-search in Algorithm~\ref{alg:beam}. Recall the definition of the new function class $\tilde{\cF}$ from Section~\ref{sec:algo2}. We can thus say that when Definition~\ref{def:consistent} holds this new system is a $Z$ armed top-$k$ contextual bandit system with realizablity (Assumption~\ref{asum:realizability}) with function class $\tilde{\cF}$. Therefore the first part of coroallary~\ref{cor:xtm} is implied by Theorem~\ref{thm:topkregret}. Similarly when Definition~\ref{def:consistent} holds along with Assumption~\ref{asum:realizability2}, this new system is a $Z$ armed top-$k$ contextual bandit system with $\epsilon$-realizablity (Assumption~\ref{asum:realizability2}) with function class $\tilde{\cF}$. Therefore the second part of corollary~\ref{cor:xtm} is implied by Theorem~\ref{thm:topkregret2}. Note that we have used the fact $|\tilde{\cF}|=|\cF|$.
\end{proof}


\section{More Experiments}
\label{sec:morexp}

\begin{figure*}
  \centering
  \subfloat[eurlex-4k]{\label{figur:1}\includegraphics[width=0.4\linewidth]{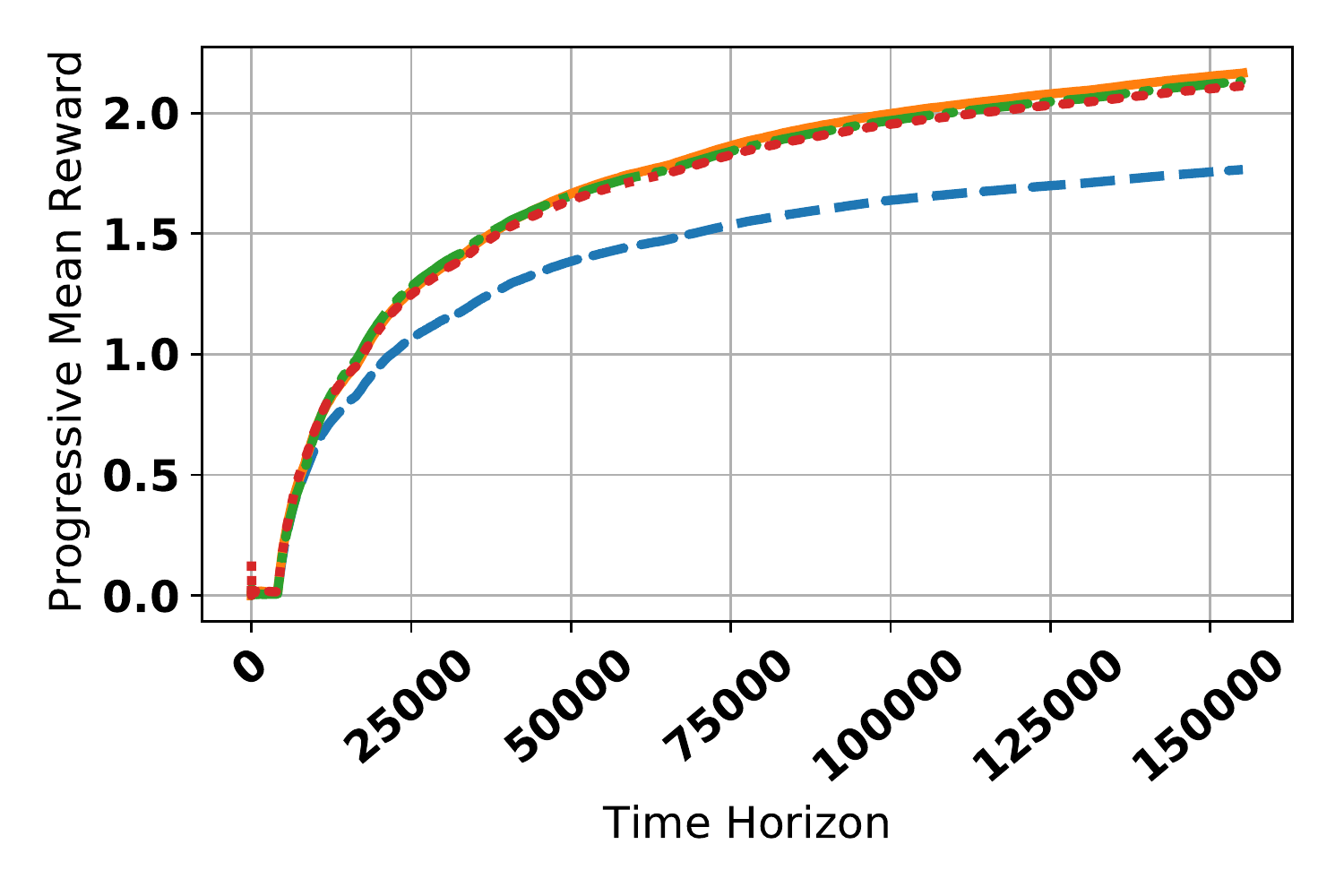}}
  \subfloat[amazoncat-13k]{\label{figur:2}\includegraphics[width=0.4\linewidth]{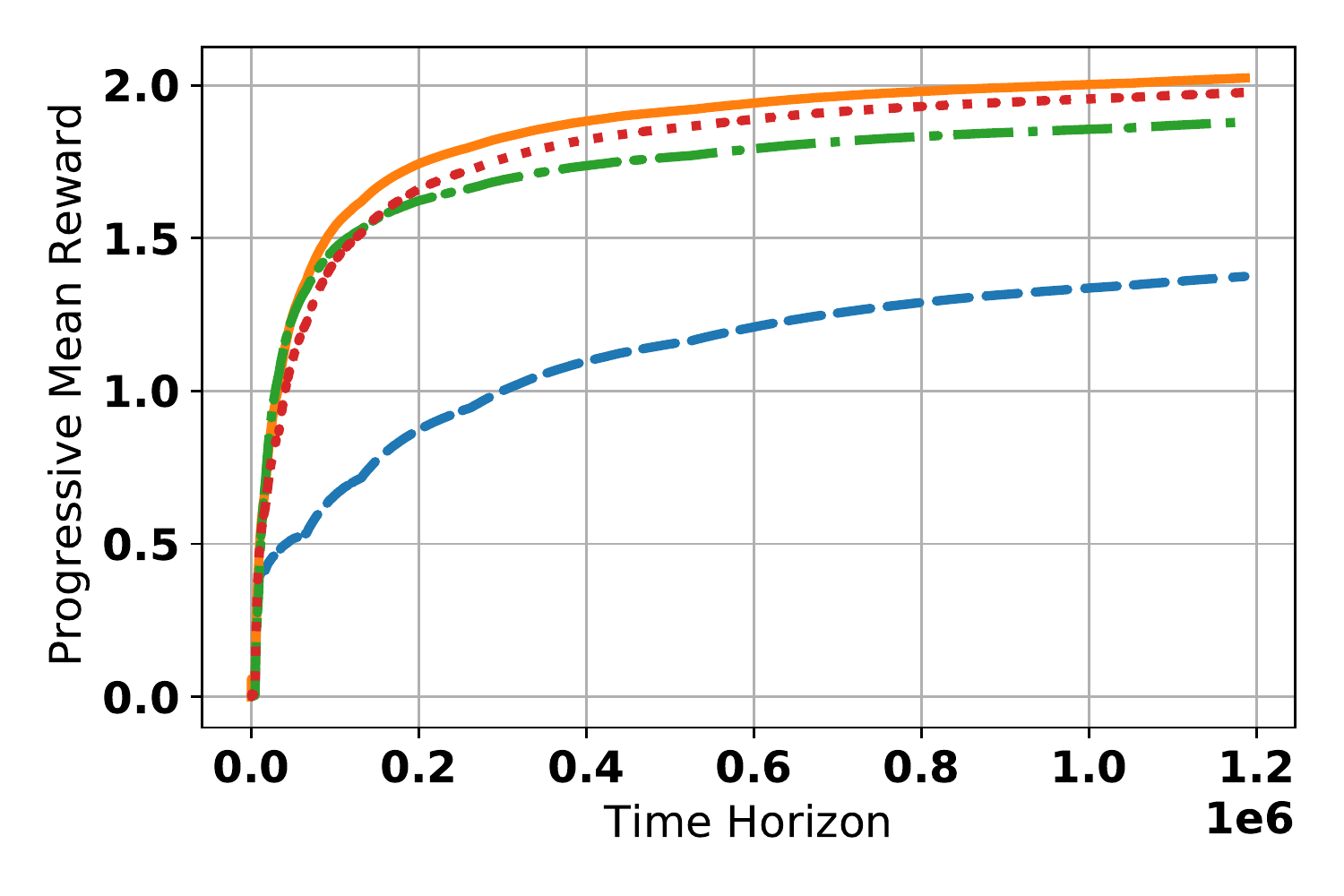}}\\
  \subfloat[wiki10-31k]{\label{figur:3}\includegraphics[width=0.4\linewidth]{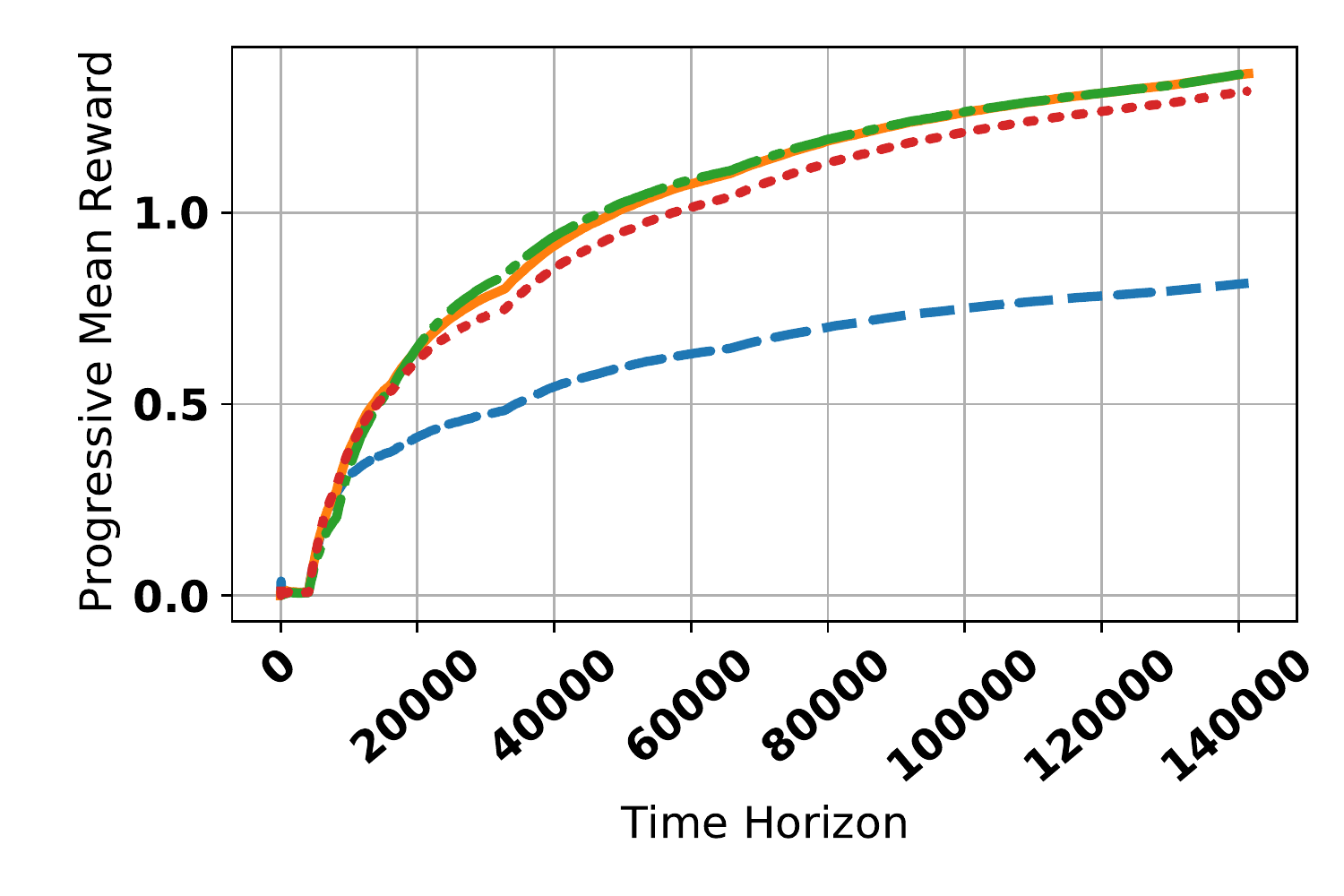}} 
  \subfloat[wiki-500k]{\label{figur:4}\includegraphics[width=0.4\linewidth]{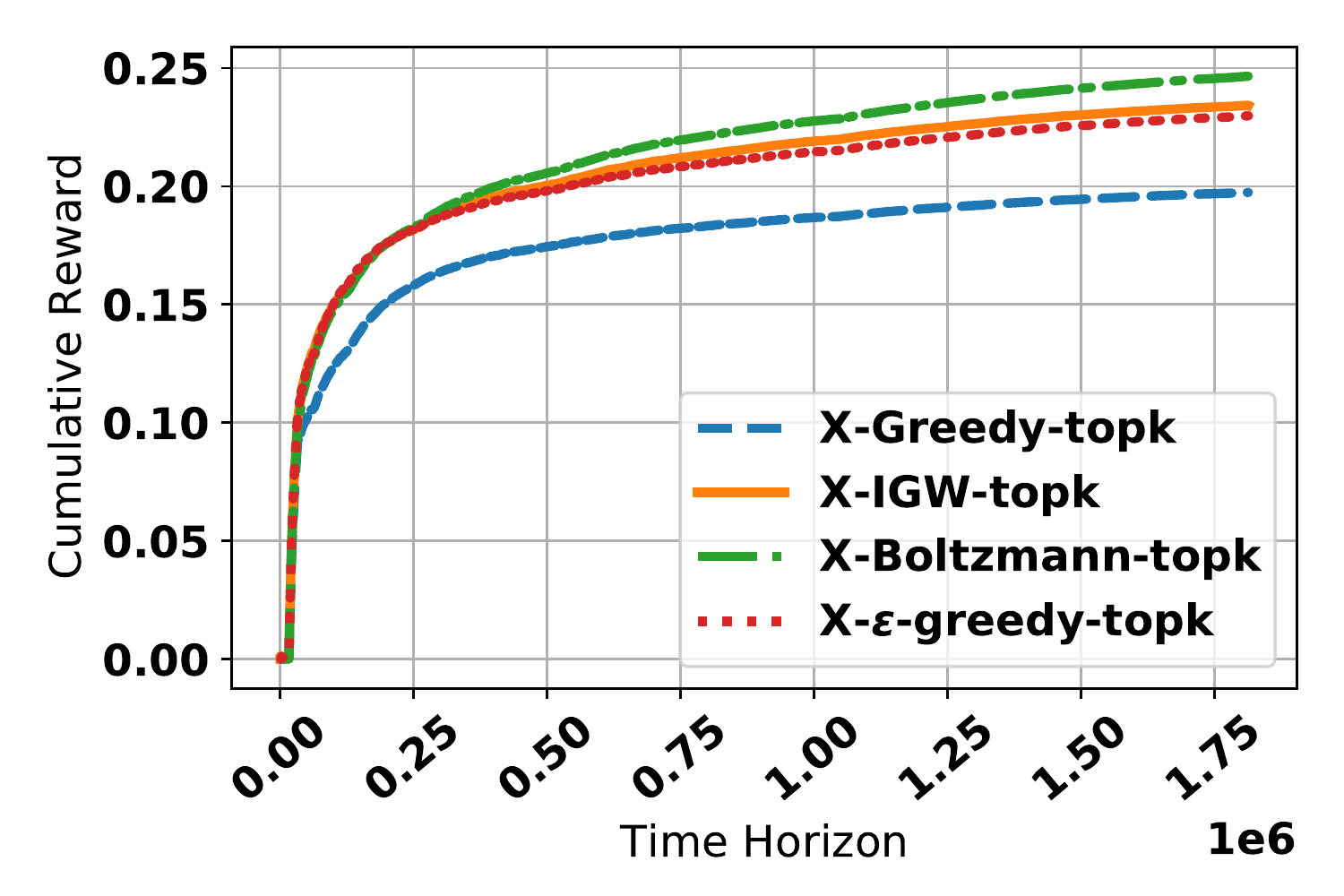}}\\
  \subfloat[amazon-670k]{\label{figur:5}\includegraphics[width=0.4\linewidth]{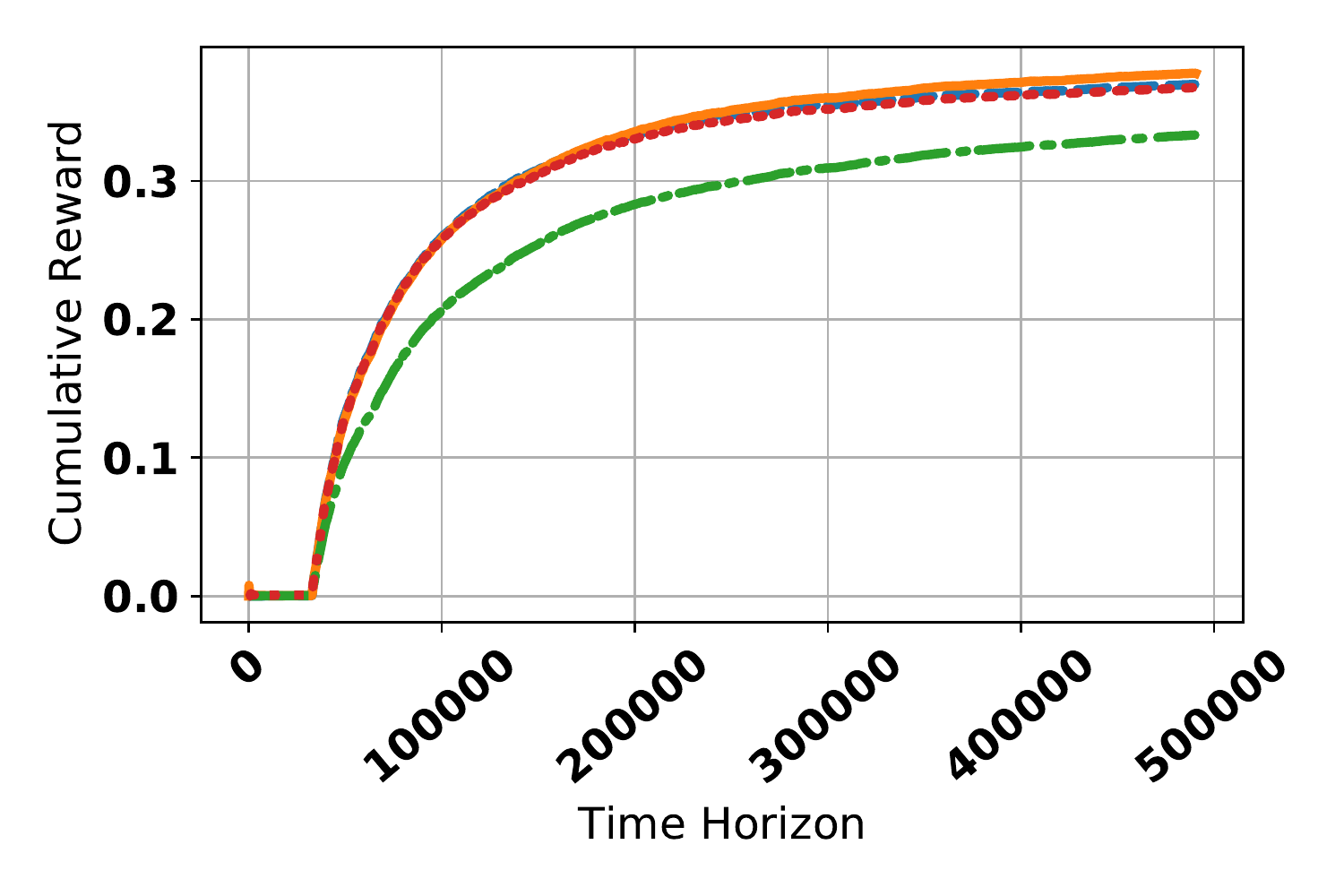}}
  \subfloat[amazon-3m]{\label{figur:6}\includegraphics[width=0.4\linewidth]{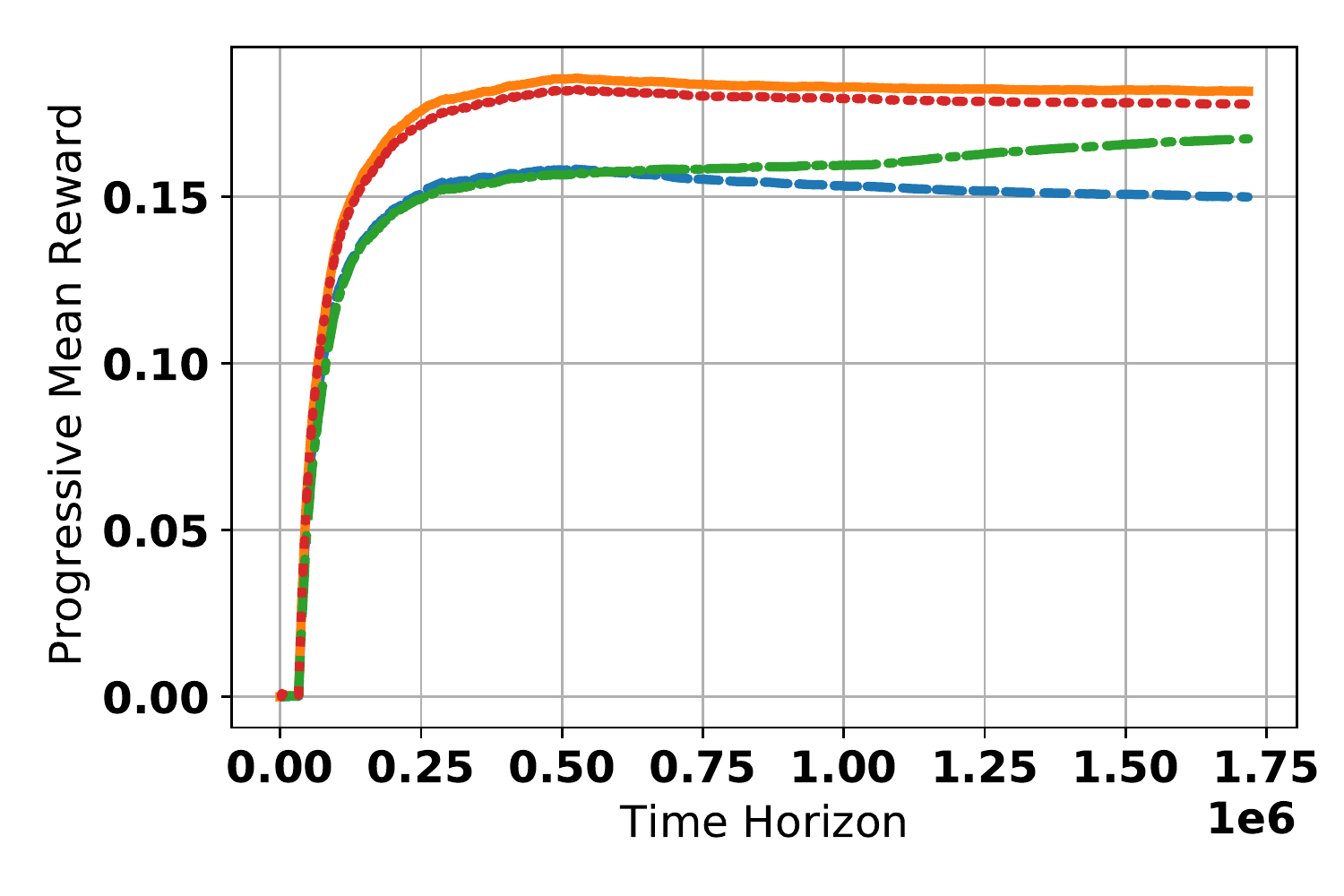}}
 \caption{\small We plot the progressive mean rewards collected by each algorithm as a function of time. All algorithms are implemented under our \xtm reduction framework. The initialization held out set for each dataset is used to train the hierarchy and the routing functions. Then the regressors for all nodes are trained on collected data at the beginning of each epoch. In all our experiments we have $k=5$. In Algorithm~\ref{alg:hier} we set the number of explore slots $r=3$. The common legend for all the plots is provided in (d). The beam-size used is $b=10$.}
 \label{fig:expapp}
\end{figure*}
In Figure~\ref{fig:expapp} we plot the progressive mean rewards vs time for all the experiments using simulated bandit feedback on \xtm datasets.

\section{Implementation Details}
\label{sec:pecos}
For the realizable experiment on Eurlex-4k shown in Figure~\ref{fig:rela}, the optimal weights $\nu^*$'s are obtained by training ridge regression on the rewards vs context for each arm in the dataset. During the experiment we also use the same function class, that is one ridge regression is trained per arm on all collected data during the course of the algorithm. The reward for arm $a$ given context $x$ is chosen as $r_t(a) = [x; 1.0]^T\nu_a^* + \epsilon_t$, where $\epsilon_t$ is a zero-mean Gaussian noise.

{\bf Simulated Bandit Feedback: } A sample in a multi-label dataset can be described as $(x, \mathbf{y})$ where $x \in \cX$ can be thought of as the context while $\mathbf{y} \in \{0, 1\}^{L}$ denotes the correct classes.  We can shuffle such a dataset into an ordering  $\{(x_t, \mathbf{y}^{(t)})\}_{t=1}^{T}$. Then we feed one sample from the dataset at each time step to the contextual bandit algorithm that we are evaluating, in the following manner, 
\begin{compactitem}
\item at time $t$, send the input $x_t$ to the contextual bandit algorithm,
\item the contextual bandit algorithm then chooses an action corresponding to $k$ arms $\ba_t$,
\item the environment then reveals the reward for {\bf only} the $k$ arms chosen $\br_t(\ba_t)$, i.e. whether the arms chosen are among the correct classes or not. 
\end{compactitem}

Note that the algorithm is free to optimize its policy for choosing arms based on everything it has seen so far. In practice however, most contextual bandit algorithms will improve their policy (the $\pred$ it has learnt) in batches. 
The total number of positive classes selected by the algorithm in this process is the total reward collected by the algorithm.

{\bf \xtm Framework:} We follow the framework described in Section~\ref{sec:xtm}. 
We first form the tree and the routing functions from the held out portion of each dataset. The assumption is that there is a small supervised dataset available to each algorithm before proceeding with the simulated bandit feedback experiment. This dataset is used to form a balanced binary tree over the labels till the penultimate level. The nodes in the penultimate level can have a maximum of $m$ children which are the original arms. The value of $m$ is specified in Table~\ref{tab:stats} for each dataset. The division of the labels in each level of the tree is done through hierarchical 2-means clustering over label embeddings, where at each clustering step we use the algorithm from~\citep{dhillon2001co}. The specific label embedding technique that we use is called Positive Instance Feature Aggregation (PIFA) (see ~\citep{prabhu2018parabel} for more details). The routing functions for each internal node in the tree is essentially a one-vs-all linear classifier trained on the held out set. The classifiers are trained using a SVM $\ell_2$-hinge loss. The positive and negative examples for each internal node is selected similar to the strategy in~\citep{prabhu2018parabel}. Finally for the regression function $\ftil(x, \atil)$ where $\atil$ can be an original arm or an internal node in the tree, we train a linear regressor $\ftil(x, \atil) = \nu_{\atil}^T [x; 1]$ as we progress through the experiment as in Algorithm~\ref{alg:hier}. Note that the held out dataset is only used to train the tree and the routing function for each of the algorithms, while the regression functions are trained from scratch only using the samples observed during the bandit feedback experiment.  The details are as follows:

\begin{compactitem}
\item {\bf Tree: } Initially a small part of the dataset is supplied to the algorithms in full-information mode. The size of this portion is captured in Table~\ref{tab:stats} in the Initialization Size column. This portion is used to construct an approximately balanced binary tree over the labels. A supervised multilabel dataset can be represented as $(X, Y)$ where $X \in \mathbb{R}^{n \times d}$ and $Y \in \mathbb{R}^{n \times L}$. We form an embedding for each label using PIFA~\citep{prabhu2018parabel, pecos2020}. Essentially the embedding for each label is the average of all instances that the label is connected to, normalized to $\ell_2$ norm 1. Then we use approximately balanced $2$-means recursively to form the tree until each leaf has less than a predefined maximum number of labels. The exact clustering algorithm used at each step is~\citep{dhillon2001co}.
\item {\bf Routing Functions: } The routing functions are essentially one-vs-all linear classifiers at each internal node of the tree. The positive examples for the classifier at an internal node are the input instances in the small supervised dataset that have a positive label in the subtree of that node. The negative instances are the set of all instances that has a positive label in the subtree of the parent of that node but not in that node's subtree. This is the same methodology as in~\citep{prabhu2018parabel}. The routing functions are trained using LinearSVC~\citep{fan2008liblinear}.
\item{\bf Regression Functions: } After creating the tree and the routing function from the small held out set, they are held fixed. The function class $\tilde{\cF}$ as Algorithm~\ref{alg:hier} progresses is a set of linear regression functions at each internal and leaf node of the tree. They are trained on past data collected during the course of the previous epochs. Note that the examples for training the regressor for an internal node are only from the singleton arms that were shown when the algorithm selected that particular internal node in the \igw sampling. The regression functions are trained using LinearSVR~\citep{fan2008liblinear}.
\item{\bf Hyper-parameter Tuning: } For all the exploration algorithms in the \xtm experiments the parameters are tuned over the eurlex-4k dataset and then held fixed. For the \igw scheme $C$ is tuned over a grid of $\{1e-7, 1e-6, \cdots, 1e7\}$. The same is done for the $\beta$ in the Boltzmann scheme. For $\epsilon$-greedy the $\epsilon$ value is tuned between $[1e-7, 1.0]$ in a equally spaced grid in the logarithmic scale. The best parameters that are found are $\beta=1.0, C=1.0$ and $\epsilon=0.167$.
\item{\bf Inference:} Inference using a trained model is done exactly according to Algorithm~\ref{alg:hier}. The beam-search over the routing function yields effective arms. Then we evaluate the linear regression functions for each of the effective arms (singleton arms or the internal nodes in the tree). If a non-singleton effective arm is chosen among the $k$ arms we randomly sample a singleton arm in it's subtree. The beam search and \igw sampling is implemented in C++ where the linear operations are implemented using the Eigen package~\citep{eigenweb}.
\end{compactitem}

\end{document}